\documentclass{article}

\usepackage[preprint]{neurips_2026}
\usepackage[utf8]{inputenc} % allow utf-8 input
\usepackage[T1]{fontenc}    % use 8-bit T1 fonts
\usepackage{hyperref}       % hyperlinks
\usepackage{url}            % simple URL typesetting
\usepackage{booktabs}       % professional-quality tables
\usepackage{amsmath}        % mathematical equations
\usepackage{amsfonts}       % blackboard math symbols
\usepackage{amsthm}         % theorem and proof environments
\usepackage{nicefrac}       % compact symbols for 1/2, etc.
\usepackage{microtype}      % microtypography
\usepackage{xcolor}         % colors
\usepackage{wrapfig}
\usepackage{multirow}
\usepackage{graphicx}
\usepackage{tabularx}
\usepackage{array}
\usepackage{booktabs}
\usepackage{subfigure}
\newtheorem{proposition}{Proposition}

\title{UniWind: Toward Unified Day-Ahead Wind Power Forecasting via Physics-Informed State Routing}

\author{%
  \textbf{
  Ronghui Xu\textsuperscript{1},
  Tongxin Wu\textsuperscript{2},
  Guozhen Zhang\textsuperscript{3},
  Yihan Li\textsuperscript{2},
  Chenjuan Guo\textsuperscript{1},
  Bin Yang\textsuperscript{1,*},
  Yong Li\textsuperscript{4,*}
  }\\
  \textsuperscript{1}East China Normal University, Shanghai, China\\
  \textsuperscript{2}Southern University of Science and Technology, Shenzhen, China\\
  \textsuperscript{3}TsingRoc.ai, Beijing, China\\
  \textsuperscript{4}Tsinghua University, Beijing, China\\
  \textsuperscript{*}Corresponding authors: byang@dase.ecnu.edu.cn, liyong07@tsinghua.edu.cn
}
\begin{document}

\maketitle

\begin{abstract}

Day-ahead wind power forecasting is essential for cost-effective power-system operation. It is primarily driven by future meteorological conditions while retaining temporal dependencies in power generation. In practice, observed wind-farm power often entangles physically available power with local environmental effects and latent operational states, such as shutdowns and curtailment. Existing physical models provide useful constraints but adapt poorly across wind farms, whereas data-driven models can capture rich correlations but often conflate meteorological effects with state-induced deviations. 
In this study, we propose UniWind, a wind power forecasting model based on physics-informed state routing. UniWind first employs a Physical Prior Estimator to construct a site-calibrated physical prior by combining site-conditioned monotonic warping with a shared physical power curve. It further applies a physical upper-bound constraint to shape this prior as a soft envelope of available wind power generation. UniWind then proposes a Latent State Encoder to model operating-state embeddings and transforms the physical prior into final power forecasts through a State-aware Power Corrector, which uses knowledge-guided supervised state routing and bounded, state-specific expert correction. Full-shot and cross-farm zero-shot experiments on more than 20 real-world datasets demonstrate the accuracy and robustness of UniWind. 
\end{abstract}

\section{Introduction}
\label{sec:intro}
\begin{wrapfigure}[15]{r}{0.50\textwidth}
    \vspace{-2em}
    % \hspace{2pt}
    \centering
    \includegraphics[width=\linewidth]{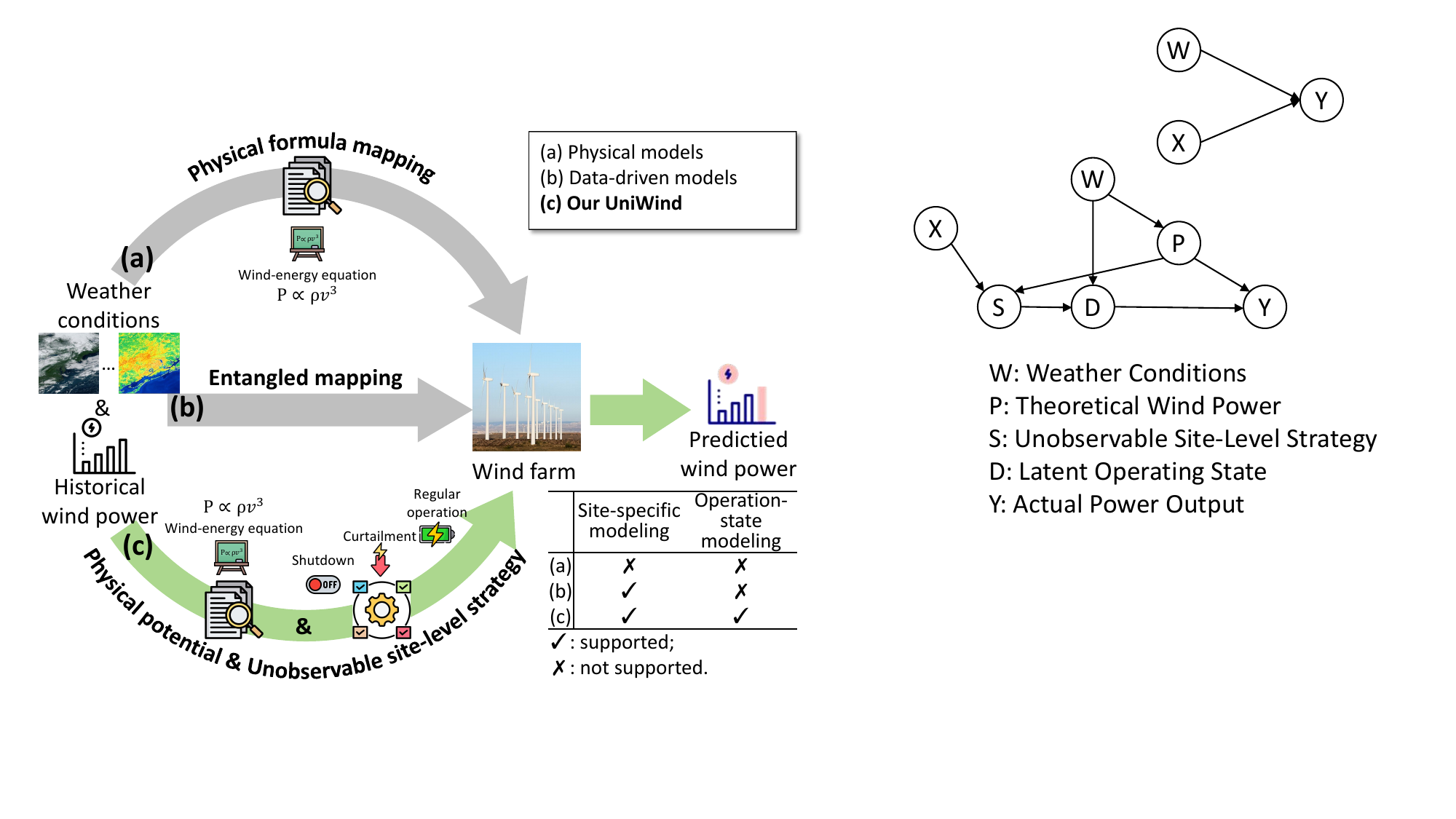}
    \vspace{-1.5em}
    \caption{Comparison of UniWind with other wind power forecasting models.}
    \label{fig:intro_concept}
\end{wrapfigure}

Day-ahead wind power forecasting plays a central role in cost-effective power-system operation~\cite{wu2026electric}, as electricity-market decisions must be made before power delivery~\cite{hanifi2020critical, xu2025cross}.
Compared with conventional time series forecasting~\cite{stitsyuk2025xpatch, liuitransformer, wangtimemixer, nietime, fang2026efficient}, day-ahead wind power forecasting exhibits a stronger dependence on meteorological conditions at the corresponding future time steps than on historical power. For example, high historical power on previous days does not imply high future power if the future wind speed is low.  Therefore, this task should be viewed as a future-meteorology-conditioned power realization forecasting problem.
\begin{wrapfigure}[10]{r}{0.28\textwidth}
    \centering
    \includegraphics[width=0.92\linewidth]{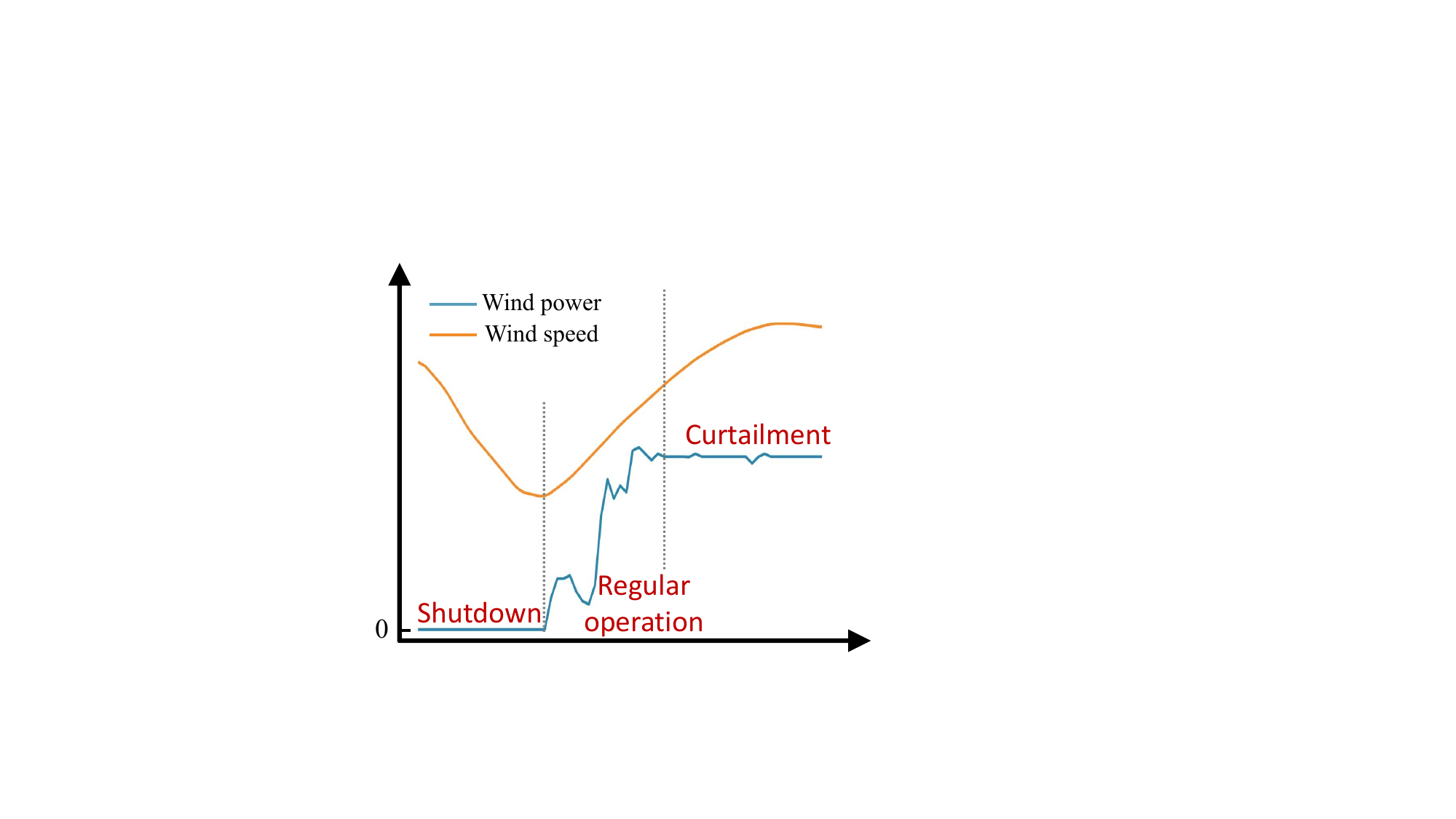}
    \vspace{-1em}
    \caption{Operational states in a wind power sequence.}
    \label{fig:state_case}
\end{wrapfigure}

Existing wind power forecasting methods can be categorized into physically driven and data-driven approaches. Physical models~\cite{focken2001previento, li2013physical, wang2018short}, detailed in Figure~\ref{fig:intro_concept}(a), provide reliable power predictions through physical constraints and structured inductive biases. However, they struggle to capture local endogenous factors, such as terrain-induced circulation and wind-farm operational states. As shown in Figure~\ref{fig:state_case}, turbines may switch among shutdown with zero output, regular operation where power follows the physical wind-speed response, and curtailment where output is intentionally constrained below the available power potential. These latent-state changes alter the wind-to-power mapping, limiting the adaptability of fixed physical formulations across different wind farms and operational conditions. Data-driven methods~\cite{sideratos2007advanced, ozkan2015novel, karijadi2023wind, zhang20252dxformer, chang2025multivariate} can capture richer correlations from real-world data. Nevertheless, as shown in Figure~\ref{fig:intro_concept}(b), they tend to entangle physical power potential with hidden-state effects, making it difficult to determine whether power deviations arise from meteorological changes or latent operational states. This weakens their robustness under diverse wind and operational conditions. Therefore, robust day-ahead wind power forecasting requires explicitly modeling the dependency from meteorological conditions to physical power potential and further to realized power under hidden-state interventions, as illustrated in Figure~\ref{fig:intro_concept}(c). However, constructing such a robust model poses two key challenges:

\textit{\textbf{First, estimating power generation potential requires balancing universal physical principles with site-specific adaptation.}} 
Wind power generation follows physical relationships that provide reusable structure across wind farms. However, real-world generation potential is also shaped by site-specific factors, such as local environmental effects, that are often unobserved.
Existing fixed physical formulas~\cite{prosper2019wind, wang2018short} may be too rigid to capture site-dependent wind-to-power responses, whereas unconstrained data-driven methods~\cite{buhan2015multistage, hong2019hybrid} may overfit noisy site-specific correlations or violate physical principles.
Therefore, an effective physical prior should preserve wind-energy physics while enabling site-level calibration under appropriate constraints.

\textit{\textbf{Second, predicting realized power requires modeling the effects of unobservable, time-varying future operational states.}} 
Even when the physical generation potential is similar, the actual output may differ due to different latent states like shutdown or curtailment. 
These dynamic states are often unobservable at forecasting time and are entangled with meteorological variation. Meanwhile, existing hidden-state-aware methods~\cite{liuimproving, zhou2023deep} typically infer hidden states directly from historical sequences, making it difficult to distinguish true changes in physically available power from hidden-state interventions. As a result, they may average over multiple states and fail to capture abrupt transitions or abnormal dynamics.  This motivates the inference of future states and the design of state-specific dependency correction.

To this end, we propose \textbf{UniWind}, a physics-informed state-routing model for day-ahead wind power forecasting. UniWind decouples physically available power from operationally induced deviations, thereby modeling both meteorology-driven generation potential and its state-dependent realization. \textit{To address the first challenge}, UniWind introduces a Physical Prior Estimator that combines site-conditioned monotone warping and a shared physical power curve to construct a site-calibrated physical prior. As the physical prior represents the ideal generation potential under given meteorological conditions, we treat it as an upper bound on realized power and construct an upper-bound constraint accordingly. \textit{To address the second challenge}, UniWind uses a Latent State Encoder to infer historical operating patterns from prior-power discrepancies and retrieve future state embeddings from similar meteorological contexts. Since future operational states are dynamic and require distinct correction mechanisms, a State-aware Power Corrector then routes each timestamp to supervised operating-state experts and applies bounded state-specific corrections to obtain realized power forecasts.
Our contributions are summarized as follows:
\begin{itemize}
    \item We propose UniWind, a physics-informed state-routing model for day-ahead wind power forecasting that explicitly factorizes the forecasting process into meteorology-driven physical power potential and state-dependent realized generation.
    \item We develop a site-calibrated Physical Prior Estimator that learns an available-power prior by combining shared wind-energy structure with site-conditioned monotone calibration and physical upper-bound regularization.
    \item We introduce a Latent State Encoder and a State-aware Power Corrector to model unobservable operational states and their effects on wind-farm generation. The encoder retrieves future state representations from historical weather-state patterns, while the corrector uses knowledge-supervised state routing and bounded state-specific experts to transform the physical prior into realized power forecasts.
    \item We conduct extensive experiments on more than 20 real-world wind-farm datasets that cover full-shot and cross-farm zero-shot settings, and demonstrate the accuracy and robustness of UniWind.
\end{itemize}

\section{Related Work}

\textbf{Wind Power Forecasting.}
Current research on wind power forecasting primarily relies on physical models or data-driven approaches.
Physical models~\cite{focken2001previento, li2013physical, wang2018short} estimate wind power by propagating meteorological forecasts through physical and engineering assumptions.
For example, Li et al.~\cite{li2013physical} first simulate steady-state flow fields under discrete inflow wind conditions to construct a database, and then predict short-term wind power by querying this database using wind inputs.
Data-driven models learn mappings from meteorological variables, historical power, and site features to future generation.
Statistical methods~\cite{sideratos2007advanced, ozkan2015novel, buhan2015multistage, phan2021hybrid, ju2019model} typically rely on handcrafted feature engineering. Tree-based approaches, such as XGBoost~\cite{phan2021hybrid} and LightGBM~\cite{ju2019model}, combine diverse meteorological features, and often serve as strong practical baselines. Deep learning models~\cite{fan2020m2gsnet, karijadi2023wind, keisler2024winddragon, zhang20252dxformer, chang2025multivariate} learn nonlinear weather-power dependencies from large-scale observations. More recently, following the emergence of foundation-model research~\cite{fang2026unraveling, wang2025lightgts}, WindFM~\cite{fan2025windfm} has been developed for wind energy prediction in unseen scenarios.
Although these methods capture rich empirical correlations, real-world wind-farm data often contain measurement noise, local environmental effects, and latent operational states, such as shutdowns and curtailment.
Consequently, physical models may fail to adapt to local endogenous conditions, whereas data-driven models may absorb hidden operational effects as noise or spurious correlations.
Neither line of work explicitly separates physical generation potential from state-dependent realized power, which is the key gap addressed by UniWind.

\textbf{Solar Power Forecasting.}
Solar power forecasting models use weather conditions and historical observations, but rely more on cloud-sensitive multimodal signals~\cite{boussif2023improving, ma2024fusionsf, wang2025solarmae, li2024solarcube}. For example, CrossViViT~\cite{boussif2023improving} uses satellite imagery as spatio-temporal context and combines it with station time series for probabilistic forecasting. FusionSF~\cite{ma2024fusionsf} discretizes heterogeneous modalities into a shared codebook for robust solar-power generation. However, wind power depends mainly on wind-speed dynamics rather than irradiance, and its uncertainty is more affected by turbine operation than cloud motion. These differences limit the direct transfer of solar forecasting models to wind power prediction.

\textbf{Time-Series Forecasting for Wind Power.}
Time-series forecasting is widely used for wind power prediction because it directly learns temporal dependencies from historical sequences~\cite{zhao2016novel, xu2025cross, wangtimemixer}. Recent models such as PatchTST~\cite{nietime}, iTransformer~\cite{liuitransformer}, and xPatch~\cite{stitsyuk2025xpatch} improve long-horizon forecasting through patch-based, variable-centric, and multi-period representations. Foundation models such as Chronos~\cite{ansari2025chronos} and Moirai~\cite{liumoirai} leverage large-scale heterogeneous time-series pretraining for downstream adaptation. Nonetheless, these generic sequence forecasting models are often dominated by historical power patterns and may inadequately capture how future meteorological changes drive wind generation.

\section{Methodology}
In this section, we present UniWind, a unified wind power forecasting model that combines physics-informed priors with latent operational state dynamics. 
\subsection{Problem Statement}
Given the numerical weather prediction (NWP) sequence \(\boldsymbol{w}\in\mathbb{R}^{(T_1+T_2) \times N_w}\), the historical power sequence \(\boldsymbol{x}\in \mathbb{R}^{T_1}\), and the site features \(\boldsymbol{c}\in\mathbb{R}^{N_s}\) of the target wind farm, our goal is to learn a forecasting model \(f_\theta\) that predicts wind power $\hat{\boldsymbol{y}}\in\mathbb{R}^{T_2}$:
\begin{equation}
    \hat{\boldsymbol{y}}
    =
    f_\theta\left(
    \boldsymbol{w},
    \boldsymbol{x},
    \boldsymbol{c}
    \right),
\end{equation}
where $T_1$ and $T_2$ denote the lengths of the historical and forecasting horizons, respectively. \(N_w\) is the number of meteorological features, and \(N_s\) is the number of static site features, such as longitude, latitude, and rated power.

\begin{figure*}[tp]
	\centering
 \includegraphics[width=0.98\linewidth]{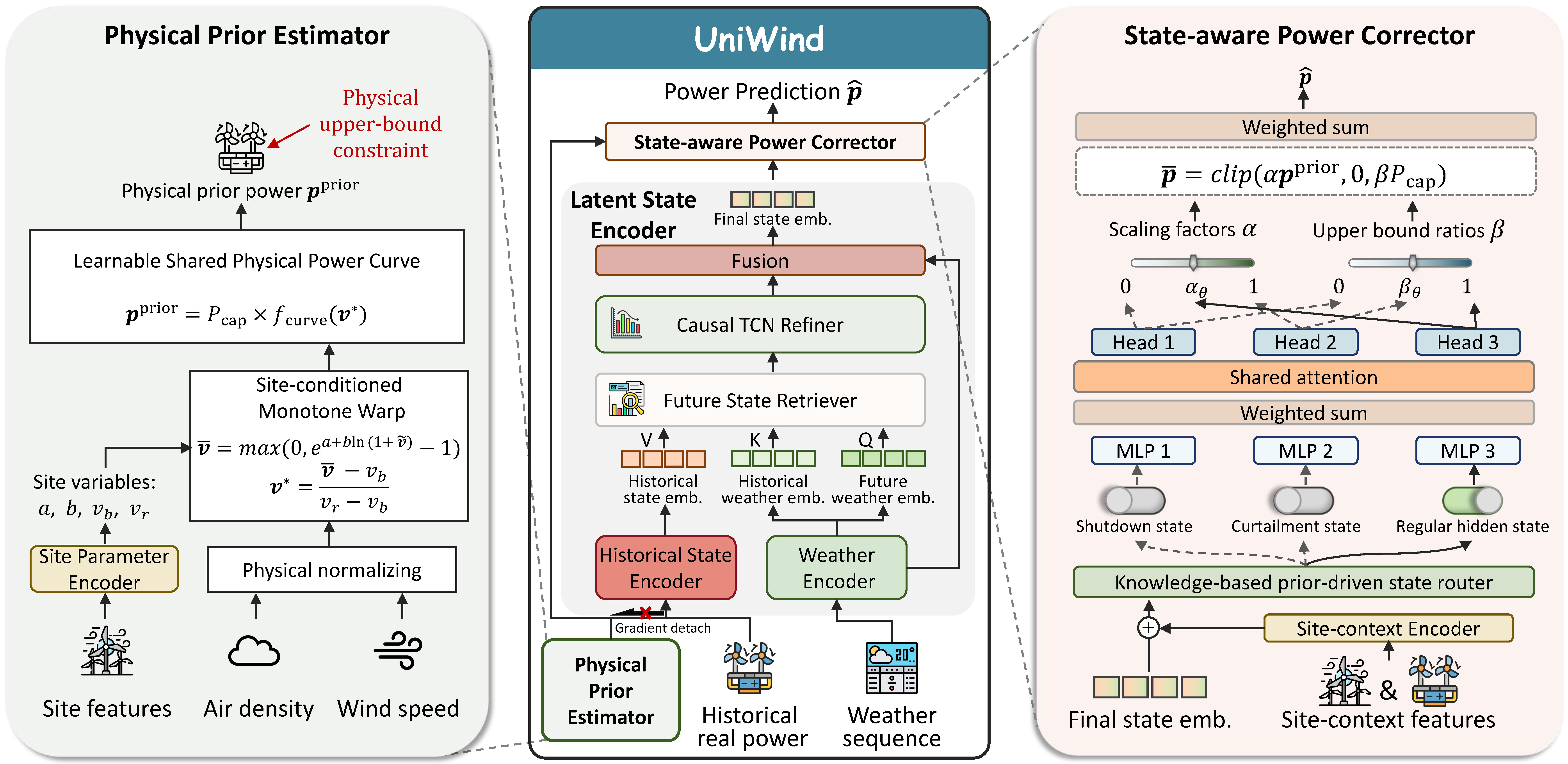}
     % \vspace{-1em}
  \caption{Overall framework of UniWind.}
      \vspace{-1em}
 \label{fig:framework}
 \vspace{-1em}
\end{figure*}

\subsection{Overall Framework}
\label{sec:latent_state_pipeline}
As shown in Figure~\ref{fig:framework}, UniWind consists of three stages. First, the Physical Prior Estimator combines density-aware wind-speed normalization, site-conditioned physical calibration, and a shared learnable power curve to estimate the physical prior under ideal conditions. A physical upper-bound constraint further encourages this prior to serve as a soft envelope for available generation. Second, the Latent State Encoder characterizes historical operational states using discrepancies between observed power and the physical prior. It then retrieves future state patterns from historical states under similar meteorological conditions, refines the operational-state embeddings, and fuses them with weather embeddings to obtain the final state embeddings. Finally, the State-aware Power Corrector routes the final state embedding at each time step to supervised operational-state experts and aggregates bounded, state-specific corrections to produce the final prediction.

\subsection{Physical Prior Estimator}
\label{sec:physical_prior_estimator}
Given that wind power is governed by strong site-specific physical constraints, we propose a Physical Prior Estimator to construct site-calibrated estimates of physically available power, providing a reliable prior for latent state modeling and state-aware correction. 

\textbf{Physical normalization.} To remove first-order variations in air density, we apply a density-aware wind-speed adjustment~\cite{pandit2020gaussian} to obtain the equivalent wind speed $\tilde{\boldsymbol{v}} = \boldsymbol{v} \left(\frac{\boldsymbol{\rho}}{\rho_0}\right)^{1/3}$, which follows the wind-energy relation \(P_{\mathrm{wind}}\propto \rho v^3\). Here, \(\boldsymbol{v} \in \mathbb{R}^{T_1+T_2}\) and \(\boldsymbol{\rho}\in \mathbb{R}^{T_1+T_2}\) denote the raw wind speed and air density over the past $T_1$ and future $T_2$ timestamps, respectively, and \(\rho_0=1.225\) denotes the reference air density. 

\textbf{Site-conditioned monotone warp.} To adapt the physical prior to different sites while remaining sufficiently constrained to avoid non-physical wind-speed inversions, we design a site-conditioned monotone warp that maps equivalent wind speed to effective wind speed as follows:
\begin{equation}
    \hat{\boldsymbol{v}}
    =
    \max\left(0,\; \exp(a)(1+\tilde{\boldsymbol{v}})^{\exp(b)} - 1\right),
    \qquad
    [a,b]^\top = h_{\phi}(\boldsymbol{c}) \in \mathbb{R}^{2}.
\end{equation}
Here, \(\boldsymbol{c}\in\mathbb{R}^{N_s}\) denotes the site features, \(h_{\phi}(\cdot)\) is an MLP-based site-conditioned parameter head, \(a\) shifts the effective speed scale, and \(b\) controls the local stretch of the curve. The mapping from \(\tilde{\boldsymbol{v}}\) to \(\hat{\boldsymbol{v}}\) preserves the ordering of wind speeds, so a larger equivalent wind speed cannot yield a smaller effective wind speed. A formal proof is provided in Appendix~\ref{app:warp-proof}.

To further align the shared power curve with the target wind farm, UniWind learns site-conditioned anchors $[v_b,v_r]^\top \in \mathbb{R}^{2}$, with $v_r>v_b$, in the same manner as $a$ and $b$. Here, \(v_b\) and \(v_r\) denote cut-in-like and rated-speed-like anchors. We then normalize the effective wind speed as $
   \boldsymbol{v}^*
    =
    \frac{
    \hat{\boldsymbol{v}}-v_b
    }{
    v_r-v_b
    }$.

\textbf{Shared physical power curve.} After site-conditioned wind-speed calibration, we introduce a shared physical power curve as a smooth, learnable physical prior inspired by classical wind-turbine power curves~\cite{wang2019approaches}. Rather than reproducing a specific manufacturer curve, this curve captures the common physical structure of wind generation: near-zero power before cut-in, nonlinear growth in the sub-rated region, and saturation near rated capacity. The physical prior power is generated as follows:
\begin{equation}
    \boldsymbol{p}^{\mathrm{prior}}
    =
    P_{\mathrm{cap}} f_{\mathrm{curve}}(\boldsymbol{v}^*),
\end{equation}
where \(P_{\mathrm{cap}}\in \mathbb{R}\) denotes the rated capacity of the target wind farm, and \(f_{\mathrm{curve}}(\cdot)\) maps the normalized effective wind speed \(\boldsymbol{v}^*\) to a capacity factor in \([0,1]\). The detailed parameterization of \(f_{\mathrm{curve}}\) is provided in Appendix~\ref{app:shared-curve}.

\textbf{Physical upper-bound constraint.}
We further shape the physical prior into a soft upper envelope of available generation.
The prior should be high enough to cover observed power, while remaining close to real power under regular operation. We therefore define
\begin{equation}
    \mathcal{L}_{\mathrm{prior}}
    =
    \frac{1}{|\Omega|}
    \sum_{t\in\Omega}
    \left[
    \boldsymbol{p}_t-\boldsymbol{p}_t^{\mathrm{prior}}
    \right]_+
    +
    \lambda_{\mathrm{tight}}
    \frac{1}{|\Omega_{\mathrm{reg}}|}
    \sum_{t\in\Omega_{\mathrm{reg}}}
    \left[
    \boldsymbol{p}_t^{\mathrm{prior}}
    -
    \boldsymbol{p}_t
    -
    \epsilon_{\mathrm{slack}}
    \right]_+ .
\end{equation}
Here, \(\Omega\) and \(\Omega_{\mathrm{reg}}\) denote the set of observed time steps and the subset labeled as regular state, respectively. The construction of the regular label is detailed in Appendix~\ref{app:state_prior_labels}. We define \([x]_+=\max(x,0)\), and \(\epsilon_{\mathrm{slack}}\) is the allowed slack between the physical prior and observed power. The weight \(\lambda_{\mathrm{tight}}\)  controls the regular-state tightness.  

\subsection{Latent State Encoder}
By encoding historical deviations and retrieving future state patterns from weather context, this module produces state embeddings that explain the gap between ideal physical availability and actual wind-farm generation. 

\textbf{Historical state encoding.} We use historical discrepancies between observed power and theoretically available power to infer recent latent operating behavior. Let \(1{:}T_1\) denote the historical window and \(T_1{+}1{:}T_1+T_2\) denote the future window. Given the historical real power \(\boldsymbol{x}\) and the historical physical prior \(\boldsymbol{p}^{\mathrm{prior}}_{1:T_1}\), UniWind constructs discrepancy-aware historical state tokens:
\begin{equation}
    \boldsymbol{\psi}
    =
    \left[
    \boldsymbol{x},\;
    \boldsymbol{p}^{\mathrm{prior}}_{1:T_1},\;
    \boldsymbol{e},\;
    \Delta \boldsymbol{x},\;
    \Delta \boldsymbol{p}^{\mathrm{prior}}_{1:T_1},\;
    \Delta \boldsymbol{e},\;
    \boldsymbol{r},\;
    \Delta \boldsymbol{r}
    \right].
\end{equation}
Here, \(\boldsymbol{\psi}\in\mathbb{R}^{T_1\times 8}\) denotes the historical state-token sequence, \(\boldsymbol{e}=\boldsymbol{x}-\boldsymbol{p}^{\mathrm{prior}}_{1:T_1}\) denotes the residual sequence, \(\boldsymbol{r}\) denotes the power-to-prior ratio sequence, and \(\Delta\) denotes the first-order temporal difference operator. These tokens are then encoded into $D$-dimensional historical state embeddings:
\begin{equation}
    \boldsymbol{s}^h
    =
    E_{\mathrm{hist}}(\boldsymbol{\psi}).
\end{equation}
Here, \(E_{\mathrm{hist}}(\cdot)\) denotes the historical state encoder, which consists of token normalization, an MLP projection, and a multi-scale temporal convolution block, and \(\boldsymbol{s}^h\in\mathbb{R}^{T_1\times D}\) denotes the historical state embedding sequence.

\textbf{Weather encoding.} UniWind encodes the full weather sequence as follows:
\begin{equation}
    \tilde{\boldsymbol{h}}^w
    =
    E_{\mathrm{w}}(\boldsymbol{w}),
    \qquad
    \boldsymbol{h}^w
    =
    T_{\mathrm{w}}(\tilde{\boldsymbol{h}}^w).
\end{equation}
Here, \(\boldsymbol{w}\in\mathbb{R}^{(T_1+T_2)\times N_w}\) is the NWP sequence, \(E_{\mathrm{w}}(\cdot)\) is an MLP-based per-step weather encoder, \(T_{\mathrm{w}}(\cdot)\) is a temporal weather encoder with positional and time information, and \(\boldsymbol{h}^w\in\mathbb{R}^{(T_1+T_2)\times D}\) is the context-aware weather embedding sequence.

\textbf{Future latent state retrieving.} 
To transfer operating-state patterns from similar historical meteorological conditions to future horizons, we treat future weather embeddings as queries, historical weather embeddings as keys, and historical state embeddings as values, thereby retrieving future latent state embeddings as follows:
\begin{equation}
    \tilde{\boldsymbol{s}}^f
    =
    \operatorname{Attn}
    \left(
    Q\boldsymbol{h}^w_{T_1+1:T_1+T_2},\;
    K\boldsymbol{h}^w_{1:T_1},\;
    V\boldsymbol{s}^h_{1:T_1}
    \right).
\end{equation}
Here, \(\tilde{\boldsymbol{s}}^f\in\mathbb{R}^{T_2\times D}\) denotes the retrieved future state sequence, and \(Q\), \(K\), and \(V\) denote learnable query, key, and value projections, respectively. 

\textbf{Latent state refining.} Future operational states are correlated with past operational states and influenced by future weather conditions. Therefore, UniWind refines the full state sequence $[\boldsymbol{s}^h;\tilde{\boldsymbol{s}}^f]$ using a causal TCN-based refiner. It then fuses the refined state features with weather features through projection and an MLP to obtain the final state embedding sequence \(\boldsymbol{z}\in\mathbb{R}^{(T_1+T_2)\times D}\).

\subsection{State-aware Power Corrector}
\label{sec:state_aware_power_corrector}

To convert ideal available power into realistic forecasts under hidden operating conditions, we propose the State-aware Power Corrector, which learns state-dependent corrections to the physical prior using the inferred state embeddings.

\textbf{Site-context conditioning.}
The same latent operating pattern may require different corrections across wind farms. To incorporate such persistent site information, UniWind encodes site context and injects it into the final state embeddings:
\begin{equation}
    \boldsymbol{c}^{\mathrm{ctx}}
    =
    E_{\mathrm{ctx}}(\boldsymbol{c}, \boldsymbol{x}),
    \qquad
    \tilde{\boldsymbol{z}}
    =
    \operatorname{LN}(\boldsymbol{z} + \boldsymbol{c}^{\mathrm{ctx}}).
\end{equation}
Here, \(E_{\mathrm{ctx}}(\cdot)\) is an MLP-based site-context encoder that fuses site features and historical power statistics, \(\boldsymbol{c}^{\mathrm{ctx}}\in\mathbb{R}^{D}\) is the correction-side site embedding, \(\tilde{\boldsymbol{z}}\in\mathbb{R}^{(T_1+T_2)\times D}\) is the site-aware latent state sequence, and \(\operatorname{LN}(\cdot)\) denotes layer normalization.

\textbf{Supervised state routing.}
Different operational states require distinct correction strategies. Accordingly, UniWind routes each state embedding to state-specific experts and supervises the routing process with state-prior labels:
\begin{equation}
    \hat{\boldsymbol{l}}
    =
    \operatorname{softmax}(R(\tilde{\boldsymbol{z}})),
    \qquad
    \mathcal{L}_{\mathrm{router}}
    =
    \operatorname{CE}(\hat{\boldsymbol{l}}, \boldsymbol{l}).
\end{equation}
Here, \(R(\cdot)\) is an MLP-based state router, \(\hat{\boldsymbol{l}}\in\mathbb{R}^{(T_1+T_2)\times 3}\) contains the probabilities of three operating-state experts: regular, curtailment and shutdown. \(\operatorname{CE}(\cdot,\cdot)\) denotes the cross-entropy loss. The state-prior labels for the historical and future windows, denoted by \(\boldsymbol{l}\), are derived from wind-power behavior rather than ground-truth operational annotations. Their construction is detailed in Appendix~\ref{app:state_prior_labels}.

\textbf{State-specific correction.}
To make the correction features both state-specific and temporally contextualized, UniWind first transforms the site-aware latent sequence \(\tilde{\boldsymbol{z}}\) with expert-specific MLPs. The resulting expert-conditioned features are weighted by the routing probabilities and then refined by a shared self-attention layer, producing an interaction-aware correction embedding \(\boldsymbol{z}'\in\mathbb{R}^{(T_1+T_2)\times D}\). Based on this embedding, UniWind assigns expert-specific scaling and upper-bound parameters:
\begin{equation}
\begin{aligned}
    (\boldsymbol{\alpha}^{\mathrm{sd}}, \boldsymbol{\beta}^{\mathrm{sd}})
    =
    (\boldsymbol{0}, \boldsymbol{0}),
    \qquad
    (\boldsymbol{\alpha}^{\mathrm{cur}}, \boldsymbol{\beta}^{\mathrm{cur}})
    =
    \left(
    \boldsymbol{1},
    \sigma(H_{\beta}(\boldsymbol{z}'))
    \right),
    \qquad
    (\boldsymbol{\alpha}^{\mathrm{reg}}, \boldsymbol{\beta}^{\mathrm{reg}})
    =
    \left(
    \sigma(H_{\alpha}(\boldsymbol{z}')),
    \boldsymbol{1}
    \right),
\end{aligned}
\end{equation}
where \(H_{\alpha}(\cdot)\) and \(H_{\beta}(\cdot)\) denote MLP-based parameter heads. The shutdown expert $\mathrm{sd}$ sets both the scaling factor and upper bound to zero, the curtailment expert $\mathrm{cur}$ learns an upper bound ratio, and the regular expert $\mathrm{reg}$ learns a multiplicative correction under the rated-capacity bound.

For each expert \(e\in\{\mathrm{sd},\mathrm{reg},\mathrm{cur}\}\), the bounded expert output is computed as follows:
\begin{equation}
    \hat{\boldsymbol{p}}^{e}
    =
    \operatorname{clip}
    \left(
    \boldsymbol{\alpha}^{e}\odot\boldsymbol{p}^{\mathrm{prior}},
    0,
    \boldsymbol{\beta}^{e}P_{\mathrm{cap}}
    \right).
\end{equation}
The final forecast is the routing-weighted aggregation of expert outputs:
\begin{equation}
    \hat{\boldsymbol{p}}
    =
    \sum_{e\in\{\mathrm{sd},\mathrm{reg},\mathrm{cur}\}}
    \hat{\boldsymbol{l}}_{e}\odot \hat{\boldsymbol{p}}^{e},
\end{equation}
where \(\hat{\boldsymbol{p}}\in\mathbb{R}^{T_1+T_2}\) is the predicted power sequence and \(\hat{\boldsymbol{l}}_{e}\) denotes the routing probability for expert \(e\). For clarity, we denote the historical segment \(\hat{\boldsymbol{p}}_{1:T_1}\) as \(\hat{\boldsymbol{x}}\) and the future segment \(\hat{\boldsymbol{p}}_{T_1+1:T_1+T_2}\) as \(\hat{\boldsymbol{y}}\).

\subsection{Training Objectives}
\label{sec:training_objectives}

Because historical reconstruction stabilizes the learned operational states, and future prediction directly optimizes the forecasting target, we construct the mean squared error loss as follows:
\begin{equation}
    \mathcal{L}_{\mathrm{mse}}
    =
    \left\lVert \hat{\boldsymbol{x}}-\boldsymbol{x}\right\rVert_F^2
    +
    \left\lVert \hat{\boldsymbol{y}}-\boldsymbol{y}\right\rVert_F^2,
\end{equation}
where \(\boldsymbol{x}\) and \(\boldsymbol{y}\) denote the ground-truth historical and future power sequences, respectively.

The final training objective combines forecasting supervision, physical-prior regularization, and supervised state routing:
\begin{equation}
    \mathcal{L}
    =
    \mathcal{L}_{\mathrm{mse}}
    +
    \mathcal{L}_{\mathrm{prior}}
    +
    \lambda_{\mathrm{router}}\mathcal{L}_{\mathrm{router}},
\end{equation}
where \(\lambda_{\mathrm{router}}\) is the weight for the router-supervision loss.

\section{Experiments}

\subsection{Experimental Setup}
\label{sec:setup}
\textbf{Datasets.} We evaluate UniWind on 24 real-world datasets collected from distinct wind farms. These datasets include the Penmanshiel~\cite{plumley_penmanshiel_2022} and Kelmarsh~\cite{plumley_kelmarsh_2022} wind farms in the UK, while the remaining datasets span three provinces in China: Shandong, Shanxi, and Anhui. The corresponding NWP data are obtained from the European Centre for Medium-Range Weather Forecasts (ECMWF)\footnote{https://www.ecmwf.int/en/forecasts/datasets} and the Global Forecast System (GFS)\footnote{https://www.ncep.noaa.gov/products/global-forecast-system}. All datasets are uniformly processed at 15-minute intervals. Detailed summaries of the datasets are provided in Appendix Tables~\ref{tab:wind_power_datasets} and~\ref{tab:nwp_datasets}. For each dataset, we use the last two months for testing, the preceding two month for validation, and earlier data for training, removing overlapping windows at split boundaries.

\textbf{Baselines.}
We compare UniWind with representative and state-of-the-art models for day-ahead wind power forecasting, which are categorized into five groups:
\textbf{Physical models.} We include PowerCurve~\cite{haas_windpowerlib_2024}, a physics-based wind-turbine power-curve model that converts meteorological conditions into power estimates using predefined power-curve assumptions.
\textbf{Statistical models.} We consider widely used tree-based statistical learning models, including LightGBM~\cite{ju2019model} and XGBoost~\cite{phan2021hybrid}, which learn nonlinear mappings from meteorological covariates to wind power.
\textbf{Time-series forecasting models.} We compare UniWind with competitive long-horizon time-series forecasting models, including PatchTST~\cite{nietime}, iTransformer~\cite{liuitransformer}, TimeMixer~\cite{wangtimemixer}, and xPatch~\cite{stitsyuk2025xpatch}.
\textbf{Renewable-energy forecasting models.} We further include solar power forecasting models, including CrossViViT~\cite{boussif2023improving}, FusionSF~\cite{ma2024fusionsf}, and the wind power forecasting model 2DXformer~\cite{zhang20252dxformer}.
\textbf{Foundation models.}
To evaluate zero-shot generalization, we compare UniWind with the wind-specific foundation model WindFM~\cite{fan2025windfm} and general time-series foundation models, including Chronos-2~\cite{ansari2025chronos} and Moirai~\cite{liumoirai}. The types of data used by all baselines are reported in Appendix Table~\ref{tab:baseline_inputs}.

\textbf{Setup.}
We evaluate all models using Mean Absolute Error (MAE) and Root Mean Squared Error (RMSE). The historical and future windows, $T_1$ and $T_2$, are set to 96 and 192 time steps, respectively, and evaluation is conducted on the last 96 prediction steps. The one-day gap between $T_2$ and the evaluation window reflects the operational requirement that generation be forecast before the target delivery day. In the full-shot setting, each dataset is trained and evaluated independently using its complete training and test sets. Due to space limitations, we report full-shot results only for SD\_A, SX\_A, AH\_A, and UK\_Penmanshiel, with regional average performance provided in Appendix Table~\ref{tab:average_performance}. In the zero-shot setting, WindFM, Chronos-2, and Moirai are evaluated directly using their officially released checkpoints, while Moirai-PT and UniWind are trained on all datasets except SD\_A, SX\_A, AH\_A, and UK\_Penmanshiel, and evaluated separately on these four datasets.

\subsection{Experimental Results}
\subsubsection{Overall Performance}
Table~\ref{tab:baseline} presents the overall performance. We compare the full-shot results of our proposed UniWind with those of end-to-end models, and compare the zero-shot results of our proposed UniWind with those of foundation models. We highlight the best and second-best performance in \textbf{bold} and \underline{underline}. 

\begin{table}[htbp]
\centering
\small
\caption{Prediction performance comparison on four datasets in terms of MAE and RMSE.}
\vspace{-0.5em}
\label{tab:baseline}
\setlength{\tabcolsep}{1.25mm}{
\begin{tabular}{llcccccccc}
\toprule
\multirow{2}*{} & 
\multirow{2}*{Method} & 
\multicolumn{2}{c}{SD\_A} & 
\multicolumn{2}{c}{SX\_A} & 
\multicolumn{2}{c}{AH\_A} & 
\multicolumn{2}{c}{UK\_Penmanshiel} \\
\cmidrule(lr){3-4} \cmidrule(lr){5-6} \cmidrule(lr){7-8} \cmidrule(lr){9-10}
\multirow{2}*{} & \multirow{2}*{}& MAE & RMSE & MAE & RMSE & MAE & RMSE & MAE & RMSE \\
\midrule
\multirow{12}*{Full-shot}&PowerCurve~\cite{haas_windpowerlib_2024} & 28.64 & 41.79 & 30.04 & 37.42 & 54.96 & 64.76 & 6.02 & 7.42 \\
\cline{2-10}
&LightGBM~\cite{ju2019model} & \underline{14.90} & \underline{23.91} & \underline{11.56} & \underline{15.89} & 27.58 & 35.51 & \underline{3.70} & \underline{4.57} \\
&XGBoost~\cite{phan2021hybrid} & 15.10 & 24.25 & 15.42 & 18.61 & 26.21 & 33.84 & 4.00 & 4.77 \\
\cline{2-10}
\multirow{12}*{}&patchTST~\cite{nietime} & 31.75 & 46.57 & 19.91 & 32.04 & 41.61 & 54.40 & 4.49 & 5.77 \\
\multirow{12}*{}&iTransformer~\cite{liuitransformer} & 35.64 & 47.50 & 19.25 & 31.14 & 44.40 & 57.38 & 5.54 & 7.03 \\
\multirow{12}*{}&TimeMixer~\cite{wangtimemixer} & 34.50 & 46.82 & 19.64 & 29.94 & 42.92 & 53.97 & 4.90 & 5.93 \\
\multirow{12}*{}&xPatch~\cite{stitsyuk2025xpatch} & 33.42 & 44.86 & 19.71 & 32.04 & 45.53 & 58.51 & 5.42 & 6.84 \\
\cline{2-10}
\multirow{12}*{}&CrossViViT~\cite{boussif2023improving} & 31.80 & 40.29 & 25.31 & 29.38 & 36.62 & 45.44 & 5.02 & 5.99 \\
\multirow{12}*{}&FusionSF~\cite{ma2024fusionsf} & 25.64 & 36.41 & 14.20 & 22.63 & \underline{17.12} & \underline{22.72} & 4.52 & 5.87 \\
\multirow{12}*{}&2DXformer~\cite{zhang20252dxformer} & 25.37 & 35.33 & 12.40 & 17.76 & 23.58 & 32.09 & 4.31 & 5.75 \\
\cline{2-10}
\multirow{12}*{}&\textbf{UniWind} & \textbf{12.05} & \textbf{18.05} & \textbf{10.20} & \textbf{15.41} & \textbf{12.66} & \textbf{17.92} & \textbf{3.51} & \textbf{4.53} \\
\cline{2-10}
\multirow{12}*{}&{Improvement (\%)} & 19.13 & 24.51 & 11.76 & 3.02 & 26.05 & 21.13 & 5.14 & 0.88 \\
\midrule
\midrule
\multirow{6}*{Zero-shot}&WindFM~\cite{fan2025windfm} & \underline{29.40} & \underline{43.37} & 28.49 & 36.81 & \underline{38.04} & 50.23 & 5.17 & 6.44 \\
\multirow{6}*{}&Chronos-2~\cite{ansari2025chronos} & 34.39 & 51.45 & 19.00 & 31.97 & 49.30 & 64.17 & 6.62 & 8.66 \\
\multirow{6}*{}&Moirai~\cite{liumoirai} & 39.03 & 52.08 & 19.95 & 30.52 & 52.71 & 63.32 & 5.35 & 6.86 \\
\multirow{6}*{}&Moirai-PT~\cite{liumoirai} & 32.05 & 45.06 & \underline{17.39} & \underline{27.18} & 38.63 & \underline{49.83} & \underline{4.15} & \underline{5.24} \\
\cline{2-10}
\multirow{6}*{}&\textbf{UniWind} & \textbf{14.56} & \textbf{21.79} & \textbf{12.30} & \textbf{18.61} & \textbf{14.28} & \textbf{18.17} & \textbf{3.87} & \textbf{5.12} \\
\cline{2-10}
\multirow{6}*{}&{Improvement (\%)} & 50.48 & 49.76 & 29.27 & 31.53 & 62.46 & 63.54 & 6.75 & 2.29 \\
\bottomrule
\end{tabular}}
\vspace{-1em}
\end{table}

\textbf{Full-shot prediction.}
As shown in Table~\ref{tab:baseline}, UniWind achieves the best full-shot performance across all representative datasets and evaluation metrics. Similar trends are observed in the regional average results reported in Appendix Table~\ref{tab:average_performance}. Among the baselines, the physics-only PowerCurve performs poorly, suggesting that fixed power-curve assumptions cannot adequately capture site-specific environmental effects or latent operational states in real-world wind farms. Statistical models, LightGBM and XGBoost, remain strong practical baselines on several datasets, but their performance is less stable across regions because they primarily learn empirical weather-power correlations. Time-series forecasting models that rely on historical sequence patterns generally underperform, confirming that historical power and NWP data are insufficient for day-ahead wind power forecasting. Renewable-energy forecasting models, including CrossViViT, FusionSF, and 2DXformer, further incorporate future NWP information and therefore provide richer meteorological context. FusionSF is particularly competitive on AH\_A. However, these models still lag behind UniWind across all reported metrics, as they do not fully disentangle physically available power from latent operational effects. By combining a site-calibrated physical prior with state-aware correction, UniWind better preserves transferable wind-power structure while adapting to sample-specific operational states.

\textbf{Zero-shot prediction.}
Table~\ref{tab:baseline} reports zero-shot results against wind-specific and general time-series foundation models. UniWind substantially outperforms WindFM, Chronos-2, Moirai, and Moirai-PT across all datasets and metrics. Among the baselines, WindFM performs best on SD\_A, whereas Moirai-PT performs best on SX\_A and UK\_Penmanshiel. However, all foundation-model baselines lag behind UniWind, indicating that wind-domain pretraining or generic time-series pretraining without future NWP and physical priors is insufficient for zero-shot wind power forecasting. These results show that UniWind transfers more effectively across wind farms by combining future NWP covariates with transferable physical prior modeling and state-aware correction.

\subsubsection{Performance under Varied Conditions}
\label{sec:varied_conditions}
Figure~\ref{fig:condition_performance} evaluates UniWind under different meteorological and operational conditions. We compare UniWind with three representative baselines: XGBoost, PatchTST, and FusionSF.

\begin{figure*}[!htp]
\vspace{-1em}
    \centering
    \subfigure[Low-wind, regular state]{
    \includegraphics[width=0.232\textwidth]{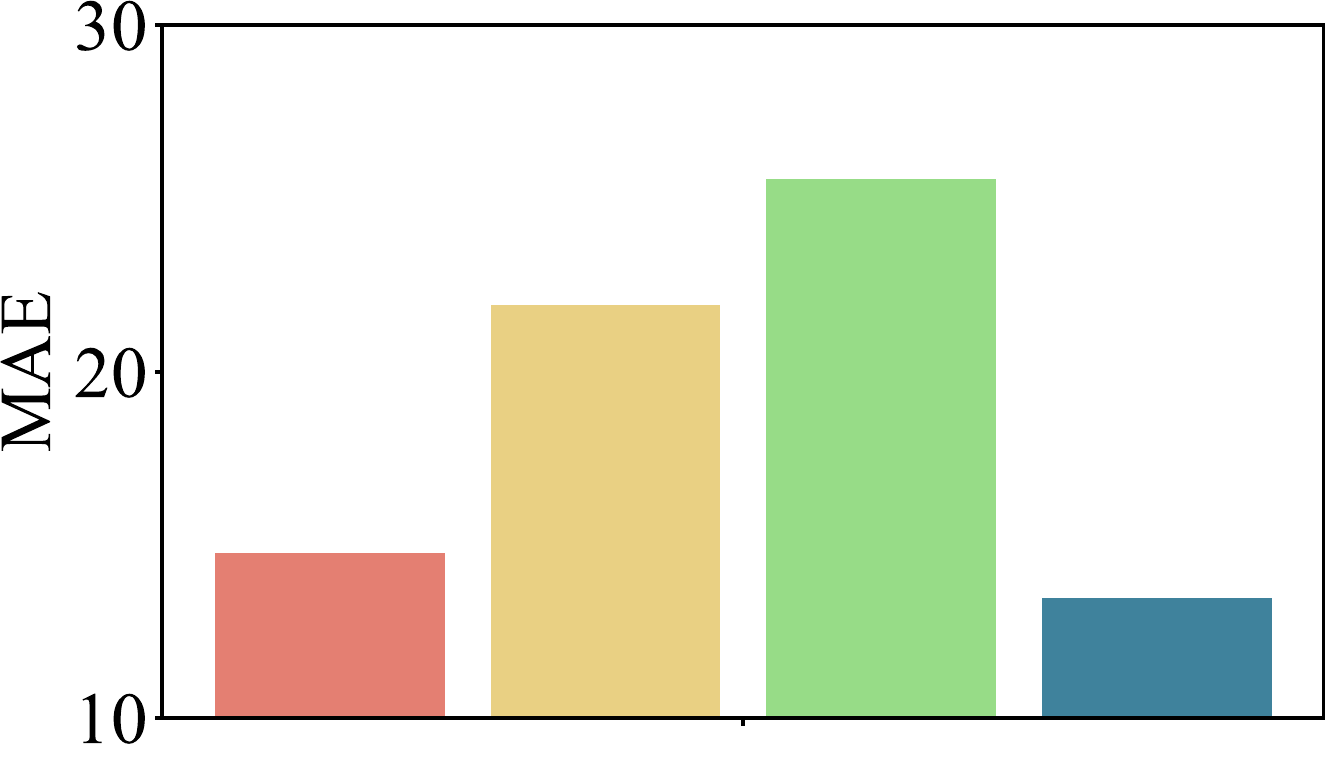}
    }
    \subfigure[Low-wind, abnormal state]{
    \includegraphics[width=0.232\textwidth]{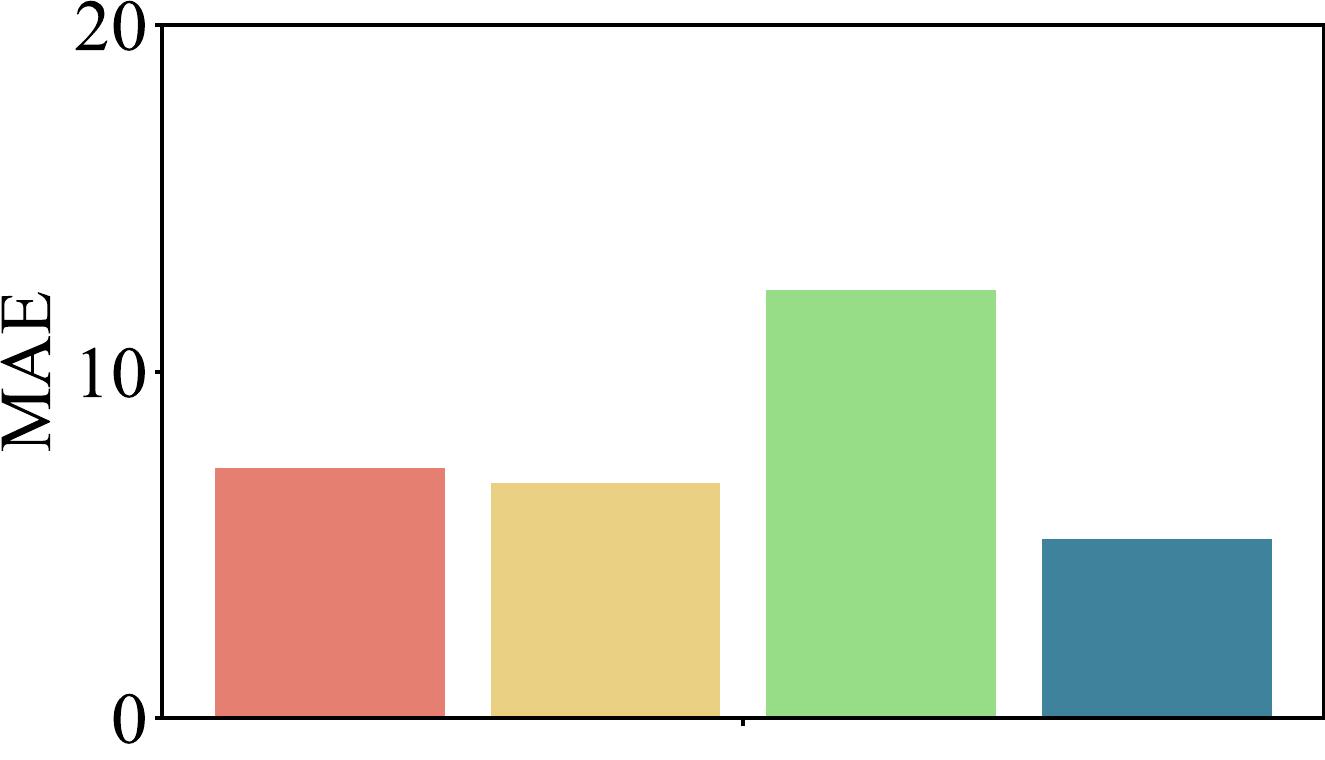}
    }
    \subfigure[High-wind, regular state]{
    \includegraphics[width=0.232\textwidth]{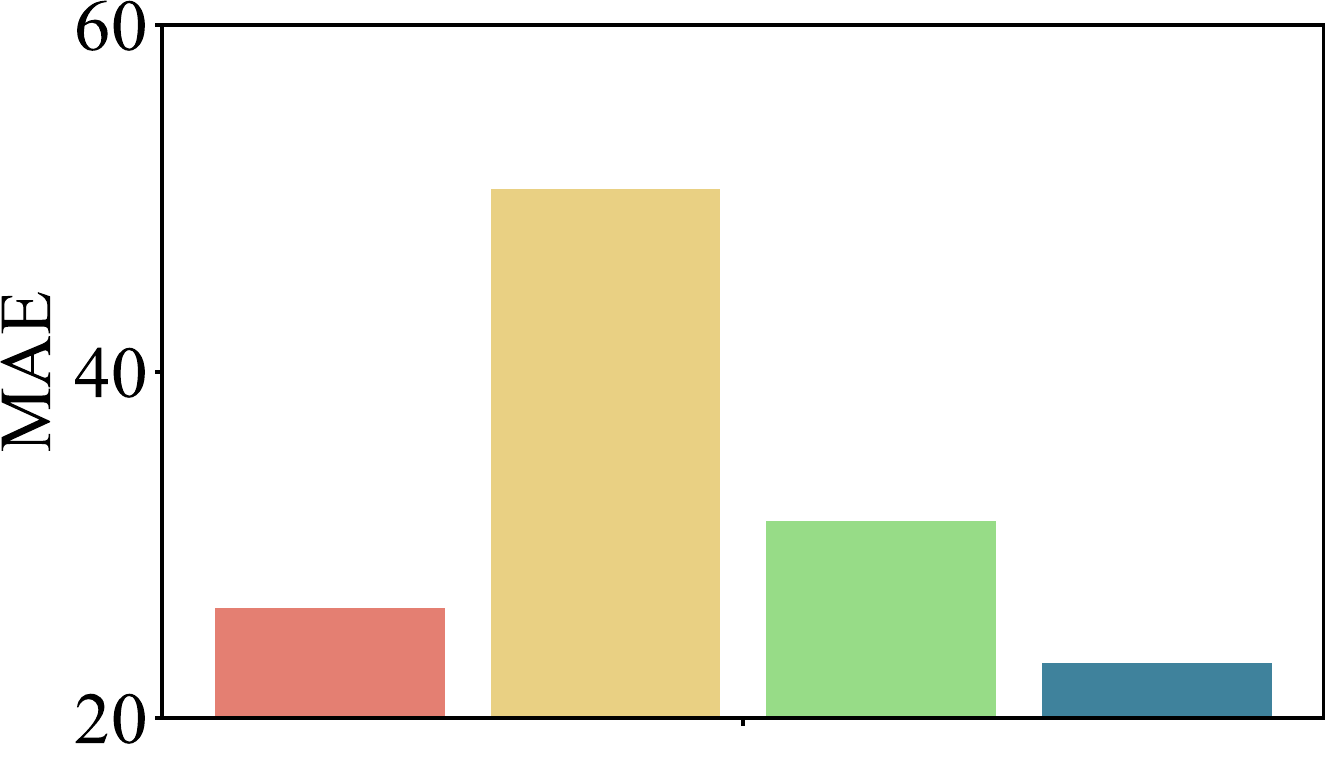}
    }
    \subfigure[High-wind, abnormal state]{
    \includegraphics[width=0.232\textwidth]{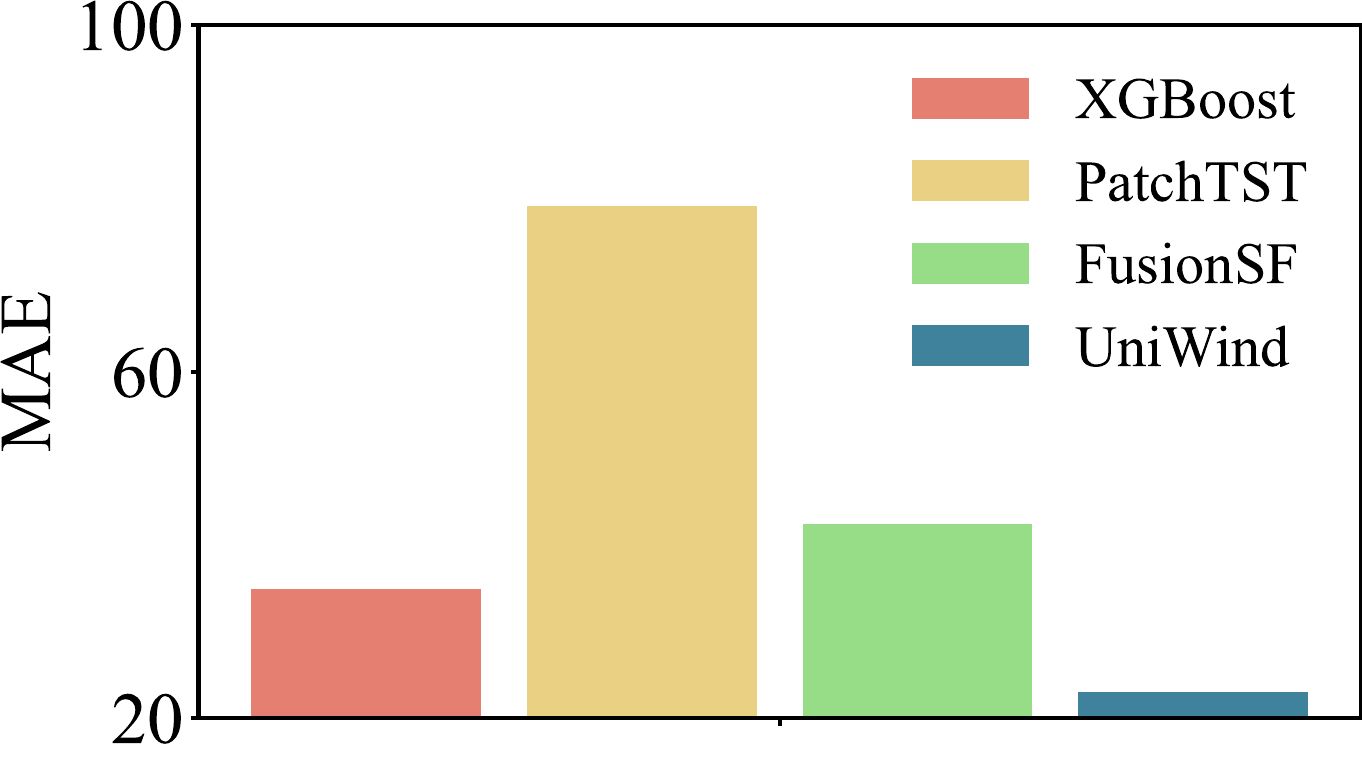}
    }
    \vspace{-1em}
    \caption{Performance comparison under varied conditions on the SD\_A dataset.}
    \label{fig:condition_performance}
    \vspace{-1em}
\end{figure*}

We evaluate robustness under joint meteorological and operational conditions by dividing the SD\_A test samples into four regimes. Wind speeds below \(9\,\mathrm{m/s}\), corresponding to the 90th percentile on SD\_A, are treated as low-wind conditions, whereas the remaining samples are treated as high-wind conditions. Regular and abnormal states are determined using the state-prior labels defined in Appendix~\ref{app:state_prior_labels}, with shutdown and curtailment grouped as abnormal states. As shown in Figure~\ref{fig:condition_performance}, UniWind achieves the lowest MAE in all four regimes. XGBoost is competitive in regular states, but its advantage diminishes once abnormal operation is introduced. PatchTST performs poorly in most regimes and suffers particularly large errors under high wind speeds. FusionSF degrades when the wind speed is low. In contrast, UniWind remains strong across all regimes and gains most in the high-wind abnormal cases. This supports the core motivation of UniWind: by separating physical power variations induced by wind speed from power deviations induced by operational states, UniWind can follow the physical wind-to-power relationship in regular regimes while correcting deviations in abnormal states.

\subsubsection{Ablation Studies}
\begin{wrapfigure}[10]{r}{0.48\textwidth}
    \vspace{-3em}
    \centering
    \includegraphics[width=0.47\textwidth]{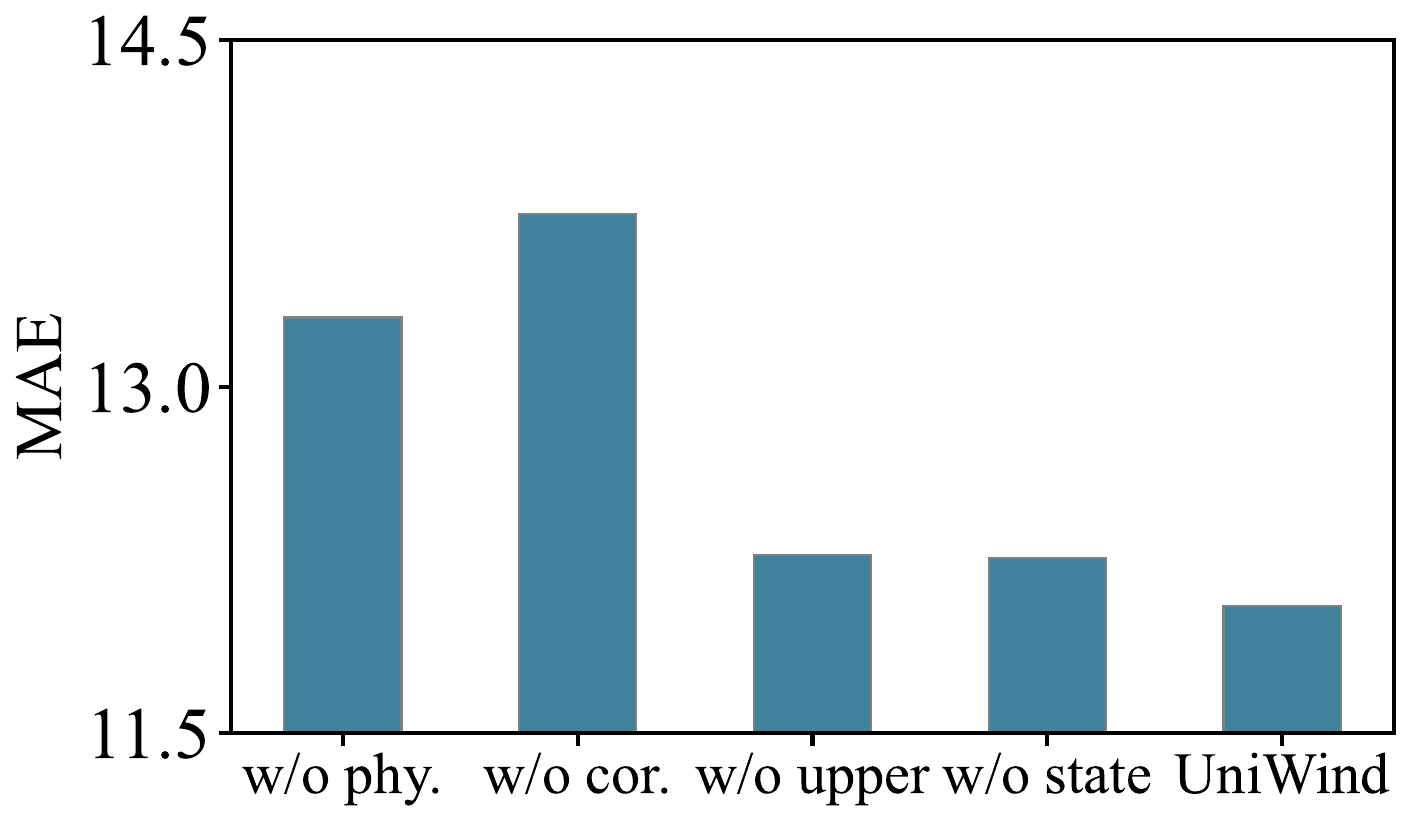}
    \vspace{-1em}
    \caption{Ablation study on the SD\_A dataset.}
    \label{fig:sd_ablation}
\end{wrapfigure}
To evaluate the contribution of each component, we construct four variants of UniWind. \textbf{w/o phy.} removes the physical prior and replaces it with a learnable parameter. \textbf{w/o cor.} directly uses the physical prior as the final prediction. \textbf{w/o upper} removes the physical upper-bound constraint from the Physical Prior Estimator. \textbf{w/o state} removes the supervised state routing loss $\mathcal{L}_{router}$.

As shown in Figure~\ref{fig:sd_ablation}, all variants perform worse than the full UniWind, demonstrating that each component contributes to the forecasting accuracy. The largest degradation appears in w/o cor., indicating that directly using the physical prior is insufficient because realized wind power is strongly affected by latent operational states. w/o phy. also shows a clear performance drop, confirming that the physical prior provides an important available-power reference for subsequent correction. Removing the upper-bound constraint or the state classification loss leads to smaller but still noticeable degradation, suggesting that physical regularization and knowledge-guided state supervision further stabilize the learned prior and improve state-aware correction.

\begin{wraptable}{r}{0.56\textwidth}
\vspace{-1em}
\caption{Efficiency comparison on the SD\_A dataset.}
\label{table:Efficiency}
\small
\centering
\renewcommand{\arraystretch}{1.1}
\setlength{\tabcolsep}{0.5mm}
\resizebox{\linewidth}{!}{
\begin{tabular}{c c c c c}
\hline
SD\_A & WindFM & Chronos-2 & Moirai & UniWind\\
\hline
Parameters (M) & 1.6 & 119.5 & 11.4 & 4.0\\
Inference time (ms/sample) &  0.56 & 25.06 & 26.35 & 0.72\\
\hline
\vspace{-1em}
\end{tabular}
}

\end{wraptable}

\subsubsection{Efficiency Comparison}
\label{sec:efficiency}
Table~\ref{table:Efficiency} compares the computational efficiency of UniWind with representative foundation-model baselines on the SD\_A dataset. UniWind has only 4.0M parameters, far fewer than Chronos-2 and Moirai, while remaining moderately larger than WindFM. In terms of inference efficiency, UniWind requires 0.72 ms per sample, achieving a speed comparable to WindFM and substantially faster than Chronos-2 and Moirai. These results show that UniWind improves forecasting accuracy without relying on large foundation-model scale, making it efficient for repeated day-ahead forecasting across wind farms.

\section{Conclusion, Limitations, and Future Work}
\label{sec:conclusion}
In this paper, we propose UniWind, a day-ahead wind power forecasting model that captures the structured dependency from meteorological conditions to physically available power and realized generation under latent operational states. UniWind first constructs a site-calibrated physical prior through site-conditioned monotone warping, a shared physical power curve, and a physical upper-bound constraint. It then uses a State-aware Power Corrector to transform this available-power prior into final forecasts through knowledge-guided state routing and bounded state-specific correction. Extensive experiments on more than 20 real-world wind-farm datasets demonstrate that UniWind achieves strong full-shot and zero-shot performance and remains effective across diverse operating conditions. We believe that UniWind can promote the safe, economical, and efficient integration of wind power into future low-carbon energy systems.

Like other NWP-driven day-ahead forecasting methods, UniWind still relies on the quality of NWP inputs. However, the proposed site-calibrated physical prior and state-aware correction help mitigate this dependence.  Future work will explore the integration of additional meteorological observations to better capture local weather dynamics and further improve forecasting reliability.

\clearpage

\bibliographystyle{plain}
\bibliography{ref}

%%%%%%%%%%%%%%%%%%%%%%%%%%%%%%%%%%%%%%%%%%%%%%%%%%%%%%%%%%%%
\clearpage
\appendix

\section{Datasets}
We provide detailed information on the datasets used in this study. The wind power data were collected from 24 wind farms across four geographically distinct regions: Anhui, Shandong, and Shanxi in China, and the United Kingdom. Table~\ref{tab:wind_power_datasets} summarizes the basic information of these wind power datasets, including their temporal coverage, temporal interval, and installed capacity. Table~\ref{tab:nwp_datasets} reports the numerical weather prediction (NWP) datasets used as meteorological covariates. Since GFS data are unavailable for the UK datasets, only ECMWF data are used for this region. In addition, only five ECMWF features are available for the UK datasets: temperature2m, surfacePressure, windSpeed100m, windDirection100m, and airDensity. Accordingly, we use only these five features for training and evaluation on the UK datasets and adopt the same feature set in the zero-shot experiments to ensure input consistency.

The raw data sources have different temporal resolutions. During preprocessing, we resample the wind power and NWP sequences by upsampling or downsampling as needed and align all variables to a unified 15-minute interval. We match each wind farm with the corresponding NWP grid point based on its latitude and longitude. For wind-farm datasets using two meteorological sources, we concatenate the feature dimensions from both sources as the weather sequence. To avoid information leakage among the training, validation, and test sets, we remove overlapping sequences around the split boundaries. All reported prediction results are computed after the model outputs are denormalized.

\begin{table}[!h]
\small
\centering
\caption{The basic information of the wind power datasets.}
\label{tab:wind_power_datasets}
\begin{tabular}{ccccc}
\toprule
\multicolumn{1}{c}{{Dataset}} & 
\multicolumn{1}{c}{Province} & 
\multicolumn{1}{c}{Temporal Duration} & 
\multicolumn{1}{c}{Temporal Interval} & 
\multicolumn{1}{c}{Capacity (MW)}\\
\midrule
 AH\_A & Anhui, China & 20230101-20241031 & 15 minutes & 149 \\
 AH\_B & Anhui, China & 20230101-20241031 & 15 minutes & 49.5 \\
 AH\_C & Anhui, China & 20230101-20241031 & 15 minutes & 49.5 \\
 AH\_D & Anhui, China & 20230101-20241031 & 15 minutes & 50 \\
 AH\_E & Anhui, China & 20230101-20241031 & 15 minutes & 64 \\
 AH\_F & Anhui, China & 20230101-20241031 & 15 minutes & 98 \\
 AH\_G & Anhui, China & 20230101-20241031 & 15 minutes & 100 \\
 AH\_H & Anhui, China & 20230101-20241031 & 15 minutes & 50 \\
 SD\_A & Shandong, China & 20240522-20251110 & 15 minutes & 147.75 \\
 SD\_B & Shandong, China & 20240522-20251110 & 15 minutes & 99 \\
 SD\_C & Shandong, China & 20240522-20251110 & 15 minutes & 250 \\
 SD\_D & Shandong, China & 20240522-20251110 & 15 minutes & 603.5 \\
 SD\_E & Shandong, China & 20240522-20251110 & 15 minutes & 198 \\
 SD\_F & Shandong, China & 20240522-20251110 & 15 minutes & 154 \\
 SD\_G & Shandong, China & 20240522-20251110 & 15 minutes & 99.5 \\
 SX\_A & Shanxi, China & 20241202-20251214 & 15 minutes & 95 \\
 SX\_B & Shanxi, China & 20241202-20251214 & 15 minutes & 122 \\
 SX\_C & Shanxi, China & 20241202-20251214 & 15 minutes & 47 \\
 SX\_D & Shanxi, China & 20241202-20251214 & 15 minutes & 137 \\
 SX\_E & Shanxi, China & 20241202-20251214 & 15 minutes & 20 \\
 SX\_F & Shanxi, China & 20241202-20251214 & 15 minutes & 24 \\
 SX\_G & Shanxi, China & 20241202-20251214 & 15 minutes & 11 \\
 Kelmarsh & Northamptonshire, England & 20170925-20210228 & 10 minutes & 12.3 \\
 Penmanshiel & Scottish Borders, England & 20170925-20210228 & 10 minutes & 16.4 \\
\bottomrule
\end{tabular}
\end{table}

\begin{table}[!h]
\small
\centering
\caption{The basic information of the NWP datasets.}
\label{tab:nwp_datasets}
\begin{tabularx}{\textwidth}{cc>{\raggedright\arraybackslash}X}
\toprule
Dataset & Temporal Interval & Features \\
\midrule
ECMWF & One hour & Temperature2m, relativeHumidity2m, windSpeed10m, windDirection10m, windSpeed100m, windDirection100m, pressure\_msl, surfacePressure, airDensity \\
GFS & One hour & Temperature2m, relativeHumidity2m, windSpeed10m, windDirection10m, windSpeed80m, windDirection80m, windSpeed120m, windDirection120m, pressure\_msl, surfacePressure, airDensity \\
\bottomrule
\end{tabularx}
\end{table}

\section{Experimental Details}
\subsection{Implementation Details}
%\textbf{Implementation details.}
UniWind is implemented in PyTorch, and all experiments are conducted on an NVIDIA GeForce RTX 2080 Ti 11 GB GPU. All baselines follow their original designs and use the parameter settings reported in their respective papers. For UniWind, we use Adam with an initial learning rate of \(1 \times 10^{-4}\), a weight decay of \(1 \times 10^{-5}\), and a ReduceLROnPlateau scheduler with a decay factor of 0.5 and a patience of 5.  \(\epsilon_{\mathrm{slack}}\), \(\lambda_{\mathrm{tight}}\), and \(\lambda_{\mathrm{router}}\) are set to 1, 0.01, and 0.4, respectively. The hidden dimension \(D\) is 256. Both the weather encoder and state retriever use 4 attention heads, and the batch size is 32.

\subsection{Baselines}
We provide detailed descriptions of the baseline models used in our experiments. These baselines cover physical modeling, statistical learning, time-series forecasting, renewable-energy forecasting, and foundation models.

\begin{table}[!h]
\small
\centering
\caption{Input information used by baselines.}
\label{tab:baseline_inputs}
\begin{tabular}{lccc}
\toprule
\textbf{Model} & \textbf{Historical NWP} & \textbf{Future NWP} & \textbf{Historical Power} \\
\midrule
PowerCurve~\cite{haas_windpowerlib_2024} & -- & \(\surd\) & -- \\
LightGBM~\cite{ju2019model} & -- & \(\surd\) & \(\surd\) \\
XGBoost~\cite{phan2021hybrid} & -- & \(\surd\) & \(\surd\) \\
\midrule
PatchTST~\cite{nietime} & \(\surd\) & -- & \(\surd\) \\
iTransformer~\cite{liuitransformer} & \(\surd\)  & -- & \(\surd\) \\
TimeMixer~\cite{wangtimemixer} & \(\surd\) & -- & \(\surd\) \\
xPatch~\cite{stitsyuk2025xpatch} & \(\surd\) & -- & \(\surd\) \\
Chronos-2~\cite{ansari2025chronos} & -- & -- & \(\surd\) \\
Moirai~\cite{liumoirai} & -- & -- & \(\surd\) \\
Moirai-PT~\cite{liumoirai} & -- & -- & \(\surd\) \\
\midrule
WindFM~\cite{fan2025windfm} & \(\surd\) & -- & \(\surd\) \\
CrossViViT~\cite{boussif2023improving} & \(\surd\) & \(\surd\) & \(\surd\) \\
2DXformer~\cite{zhang20252dxformer} & \(\surd\) & \(\surd\) & \(\surd\) \\
FusionSF~\cite{ma2024fusionsf} & \(\surd\) & \(\surd\)  & \(\surd\) \\
\bottomrule
\end{tabular}
\end{table}

\begin{itemize}
    \item \textbf{PowerCurve}~\cite{haas_windpowerlib_2024}: A physics-based wind power model that converts meteorological conditions into power estimates according to predefined wind-turbine power-curve assumptions.

    \item \textbf{LightGBM}~\cite{ju2019model}: A gradient boosting decision tree model that learns nonlinear mappings from meteorological covariates to wind power generation. It is widely used as a strong statistical baseline for tabular forecasting tasks. We concatenate future NWP and historical power data as input to LightGBM.

    \item \textbf{XGBoost}~\cite{phan2021hybrid}: An efficient tree-based boosting model that captures nonlinear feature interactions from future NWP variables and provides a strong practical baseline for wind power forecasting.  We use future NWP data and historical power data as inputs to XGBoost.

    \item \textbf{PatchTST}~\cite{nietime}: A patch-based Transformer model for long-term time-series forecasting. It segments historical power sequences into patches and models temporal dependencies with a channel-independent design.

    \item \textbf{iTransformer}~\cite{liuitransformer}: A Transformer-based time-series forecasting model that treats variables as tokens and learns multivariate temporal dependencies through an inverted attention structure.

    \item \textbf{TimeMixer}~\cite{wangtimemixer}: A time-series forecasting model that mixes temporal patterns across multiple scales, enabling it to capture both short-term and long-term dependencies from historical power observations.

    \item \textbf{xPatch}~\cite{stitsyuk2025xpatch}: A long-horizon forecasting model that enhances patch-based time-series representations for historical sequence modeling.

    \item \textbf{CrossViViT}~\cite{boussif2023improving}: A renewable-energy forecasting model originally designed to fuse spatio-temporal weather context with station-level time series. In our experiments, this model is adapted for wind power prediction by incorporating historical and future NWP data as well as historical generation data.

    \item \textbf{FusionSF}~\cite{ma2024fusionsf}: A solar power forecasting model that fuses heterogeneous temporal and weather information. Since satellite cloud imagery is unavailable, we replace the satellite-image branch with historical NWP data.

    \item \textbf{2DXformer}~\cite{zhang20252dxformer}: A wind power forecasting model that captures temporal dependencies from historical weather and power sequences through Transformer-based representation learning. In our experiments, future NWP data are additionally incorporated to ensure consistency across the experimental datasets.

    \item \textbf{WindFM}~\cite{fan2025windfm}: A wind power foundation model pretrained on large-scale wind power data. We evaluate it in the zero-shot setting using the released checkpoint.
    
    \item \textbf{Chronos-2}~\cite{ansari2025chronos}: A general-purpose time-series foundation model pretrained on heterogeneous time-series corpora. We evaluate it as a zero-shot forecasting baseline using the released checkpoint.
    
    \item \textbf{Moirai}~\cite{liumoirai}: A universal time-series foundation model for zero-shot forecasting across diverse domains. We evaluate it in the zero-shot setting using the released checkpoint.
    
    \item \textbf{Moirai-PT}~\cite{liumoirai}: A Moirai variant further pretrained on wind power datasets to assess whether additional wind-domain pretraining improves zero-shot wind power forecasting.
\end{itemize}

Table~\ref{tab:baseline_inputs} summarizes the input information used by each baseline. Here, \(\surd\) indicates that the corresponding input type is used by the model, and ``--'' indicates that it is not used.

\section{Supplementary Methods}
\subsection{Supplementary Proof}
\label{app:warp-proof}

\begin{proposition}[Monotonicity of the site-conditioned warp]
For a given wind farm with site features \(\boldsymbol{c}\), let
\([a,b]^\top=h_{\phi}(\boldsymbol{c})\in\mathbb{R}^{2}\), where \(h_{\phi}(\cdot)\) is the site-conditioned parameter head, \(a\) is the site-specific speed-scale parameter, and \(b\) is the site-specific stretch parameter. For any timestamp \(t\), let \(\tilde v_t\ge 0\) denote the equivalent wind speed and define the unclipped warp
\begin{equation}
    g(\tilde v_t)
    =
    \exp(a)(1+\tilde v_t)^{\exp(b)}-1.
\end{equation}
Then \(g(\tilde v_t)\) is strictly increasing with respect to \(\tilde v_t\). Consequently, the clipped effective wind speed $
    \hat v_t
    =
    \max(0,g(\tilde v_t))$
is non-decreasing with respect to \(\tilde v_t\).
\end{proposition}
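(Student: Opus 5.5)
The plan is to prove strict monotonicity of $g$ directly, either by differentiation or by comparing two inputs, and then transfer it through the clipping map. Fix the site features $\boldsymbol{c}$, so that $a$ and $b$ are fixed real constants. Set $k=\exp(b)>0$ and $A=\exp(a)>0$. On the domain $\tilde v_t\ge 0$ the base $1+\tilde v_t$ is at least $1$, hence strictly positive. Therefore $(1+\tilde v_t)^{k}=\exp\bigl(k\log(1+\tilde v_t)\bigr)$ is well defined and continuously differentiable in $\tilde v_t$.

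\textbf{Step 1 (strict increase of $g$).} I will give a derivative-free argument to avoid any boundary issue at $\tilde v_t=0$. Take $0\le u<w$. Then $\log(1+u)<\log(1+w)$ because $\log$ is strictly increasing. Multiplying by $k>0$ preserves the strict inequality, and $\exp$ is strictly increasing, so $(1+u)^k<(1+w)^k$. Multiplying by $A>0$ and subtracting $1$ gives $g(u)<g(w)$. As a sanity check, the derivative is
\[
g'(\tilde v_t)=\exp(a)\exp(b)\,(1+\tilde v_t)^{\exp(b)-1}>0,
\]
since every factor is strictly positive. The key point is that the parameterization through $\exp(\cdot)$ forces both the scale and the exponent to be positive for every value $h_\phi(\boldsymbol{c})$ can output.

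\textbf{Step 2 (clipping).} The map $m(y)=\max(0,y)$ is non-decreasing on $\mathbb{R}$. If $y_1\le y_2$, then $\max(0,y_1)\le\max(0,y_2)$, because $0\le\max(0,y_2)$ and $y_1\le y_2\le\max(0,y_2)$. A composition of a non-decreasing map with a strictly increasing map is non-decreasing. Hence $\hat v_t=m(g(\tilde v_t))$ is non-decreasing in $\tilde v_t$.

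It is worth remarking why only non-decreasing, and not strict, monotonicity holds for $\hat v_t$. When $a<0$, we have $g(0)=\exp(a)-1<0$, so $\hat v_t=0$ on the initial interval where $g\le 0$ and strictness fails there. Strictness does hold wherever $g>0$.

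\textbf{Main obstacle.} There is no real obstacle here; the content is routine. The only care needed is to justify that real powers with a non-integer exponent are well defined and order-preserving. This is why I will route the argument through $\exp(k\log(\cdot))$ on the positive base $1+\tilde v_t$, rather than appeal to informal power rules.
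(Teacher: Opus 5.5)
Your proof is correct and follows essentially the same route as the paper: the paper shows $g'(\tilde v_t)=\exp(a)\exp(b)(1+\tilde v_t)^{\exp(b)-1}>0$ and then composes with the non-decreasing map $\max(0,\cdot)$, which is exactly your Step 2. Your derivative-free comparison through $\exp(k\log(1+\tilde v_t))$ is a harmless variant (the boundary concern at $\tilde v_t=0$ does not arise, since $g$ is differentiable on all of $(-1,\infty)$), and your remark that strictness of $\hat v_t$ fails when $a<0$ is a correct addition that the paper does not make.
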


\begin{proof}
For a fixed site, \(a\) and \(b\) are constants with respect to the equivalent wind speed \(\tilde v_t\). For \(\tilde v_t\ge 0\), the derivative of the unclipped warp is
\begin{equation}
    \frac{\partial g(\tilde v_t)}{\partial \tilde v_t}
    =
    \exp(a)\exp(b)(1+\tilde v_t)^{\exp(b)-1}.
\end{equation}
Since \(\exp(a)>0\), \(\exp(b)>0\), and \(1+\tilde v_t>0\), we have
\begin{equation}
    \frac{\partial g(\tilde v_t)}{\partial \tilde v_t}>0.
\end{equation}
Therefore, \(g(\tilde v_t)\) is strictly increasing on \(\tilde v_t\ge 0\).

It remains to consider the clipping operation. The function \(m(x)=\max(0,x)\) is non-decreasing in \(x\). Since \(g(\tilde v_t)\) is strictly increasing in \(\tilde v_t\), the composition \(m(g(\tilde v_t))\) is non-decreasing in \(\tilde v_t\). Thus, the clipped effective wind speed \(\hat v_t=\max(0,g(\tilde v_t))\) preserves wind-speed ordering: a larger equivalent wind speed cannot produce a smaller effective wind speed. The result applies element-wise to the vector form used in the main text.
\end{proof}

\subsection{Shared Physical Power Curve}
\label{app:shared-curve}

The shared canonical curve \(f_{\mathrm{curve}}(\cdot)\) maps the normalized effective wind speed \(\boldsymbol{v}^*\) to a capacity factor in \([0,1]\), and is inspired by the classical wind-turbine power-curve structure~\cite{wang2019approaches}. Specifically, a typical power curve has near-zero output before the cut-in region, nonlinear power growth in the sub-rated region, and a rated-power plateau after the rated-speed region. The nonlinear rising segment is physically motivated by the wind-energy relation \(P_{\mathrm{wind}}\propto \rho v^3\). We do not model the high-wind cut-out regime in this shared available-power curve, because abnormal shutdowns and curtailment effects are handled by the State-aware Power Corrector.

We parameterize this physical prior as an element-wise smooth approximation:
\begin{equation}
    f_{\mathrm{curve}}(\boldsymbol{v}^*)
    =
    \operatorname{clip}
    \left(
    g_{\mathrm{on}}(\boldsymbol{v}^*)\,g_{\mathrm{mid}}(\boldsymbol{v}^*)\,h(\boldsymbol{v}^*)
    +
    g_{\mathrm{rated}}(\boldsymbol{v}^*),0,1
    \right),
\end{equation}
where \(\operatorname{clip}(\cdot,0,1)\) constrains the output to a valid capacity factor. The cut-in gate and rated-region gate are defined as
\begin{equation}
    g_{\mathrm{on}}(\boldsymbol{v}^*)
    =
    \sigma(\tau_{\mathrm{on}}\boldsymbol{v}^*),
    \qquad
    g_{\mathrm{rated}}(\boldsymbol{v}^*)
    =
    \sigma(\tau_{\mathrm{rated}}(\boldsymbol{v}^*-1)),
\end{equation}
where \(\sigma(\cdot)\) is the sigmoid function, \(\tau_{\mathrm{on}}>0\) controls the sharpness of the cut-in transition, and \(\tau_{\mathrm{rated}}>0\) controls the sharpness of the transition into the rated region. The sub-rated gate and rising segment are
\begin{equation}
    g_{\mathrm{mid}}(\boldsymbol{v}^*)
    =
    1-g_{\mathrm{rated}}(\boldsymbol{v}^*),
    \qquad
    h(\boldsymbol{v}^*)
    =
    k\,\mathrm{softplus}(\boldsymbol{v}^*)^{n},
\end{equation}
where \(g_{\mathrm{mid}}(\cdot)\) suppresses the rising segment after the rated region, \(\mathrm{softplus}(x)=\log(1+\exp(x))\) provides a smooth positive basis for the rising segment, \(k>0\) controls the sub-rated scale, and \(n>0\) controls the growth curvature. We use a learnable exponent \(n\) instead of a fixed cubic power because \(P_{\mathrm{wind}}\propto \rho v^3\) describes available wind energy, whereas the effective wind-farm power curve can deviate from an exact cubic law due to turbine efficiency. In implementation, the learnable parameter \(n\) is bounded around the physically motivated cubic regime, while \(k\), \(\tau_{\mathrm{on}}\) and \(\tau_{\mathrm{rated}}\) are constrained to positive ranges. This preserves the primary physical shape of a classical power curve.

\subsection{State-Prior Label Construction}
\label{app:state_prior_labels}
These state-prior labels are weak supervisory signals derived from wind-power behavior rather than ground-truth operational annotations. They provide weak supervision for the state router without requiring manually annotated operational states.
For each wind farm, we first construct an empirical available-power curve from wind-power pairs in the training dataset. Let \(\bar v_t\) and \(p_t\) denote the representative wind speed and the observed power at timestamp $t$, respectively. We divide the wind-speed range between the \(1\%\) and \(99.5\%\) quantiles into bins and estimate the available power in each bin using the upper empirical quantile:
\begin{equation}
    q_j
    =
    \operatorname{Quantile}_{0.9}
    \left(
    \left\{
    p_t^{\mathrm{clip}}:\bar v_t\in B_j
    \right\}
    \right),
\end{equation}
where \(B_j\) is the \(j\)-th wind-speed bin and \(q_j\) is the empirical available-power estimate for that bin. Missing bins are linearly interpolated. The resulting curve is smoothed by short moving-window filters, and monotonicity is enforced by a cumulative maximum. Interpolating this monotone curve at \(\bar v_t\) gives the expected available power \(q_t^{\mathrm{exp}}\in[0,P_{\mathrm{cap}}]\).

We then assign state-prior labels by comparing the measured power with the expected available power. Define the power ratio, power gap, and local power variability as
\begin{equation}
    r_t^{\mathrm{exp}}
    =
    \frac{p_t^{\mathrm{clip}}}{\max(q_t^{\mathrm{exp}},10^{-6})},
    \qquad
    \sigma_t^{(4)}
    =
    \operatorname{Std}
    \left(
    p_{t-1:t+2}^{\mathrm{clip}}
    \right).
\end{equation}
Here, \(r_t^{\mathrm{exp}}\) measures the realized power ratio relative to the expected available power, and \(\sigma_t^{(4)}\) is the centered four-step local standard deviation used to identify flat curtailed segments. We use the thresholds
\begin{equation}
    \eta_{\mathrm{avail}}=0.2,
    \quad
    \eta_{\mathrm{pow}}=0.05,
    \quad
    \eta_{\mathrm{ratio}}=0.7,
    \quad
    \gamma_{\mathrm{std}}=1.
\end{equation}
A timestamp is considered to have sufficient available power when \(q_t^{\mathrm{exp}}\ge \eta_{\mathrm{avail}}P_{\mathrm{cap}}\). The shutdown and curtailment masks are then defined as
\begin{equation}
    m_t^{\mathrm{sd}}
    =
    \left[
    q_t^{\mathrm{exp}}\ge \eta_{\mathrm{avail}}P_{\mathrm{cap}}
    \right]
    \wedge
    \left[
    p_t^{\mathrm{clip}}\le \eta_{\mathrm{pow}}P_{\mathrm{cap}}
    \right],
\end{equation}
\begin{equation}
    m_t^{\mathrm{cur}}
    =
    \left[
    q_t^{\mathrm{exp}}\ge \eta_{\mathrm{avail}}P_{\mathrm{cap}}
    \right]
    \wedge
    \left[
    r_t^{\mathrm{exp}}\le \eta_{\mathrm{ratio}}
    \right]
    \wedge
    \left[
    \sigma_t^{(4)}\le \gamma_{\mathrm{std}}
    \right].
\end{equation}
Here, \(m_t^{\mathrm{sd}}\) identifies near-zero generation despite sufficient available power, while \(m_t^{\mathrm{cur}}\) identifies sustained under-production with low local variability. The final state-prior label \(l_t\in\{0,1,2\}\) is assigned as
\begin{equation}
    l_t
    =
    \begin{cases}
    0, & m_t^{\mathrm{sd}},\\
    1, & m_t^{\mathrm{cur}}\ \mathrm{and}\ \neg m_t^{\mathrm{sd}},\\
    2, & \mathrm{otherwise},
    \end{cases}
\end{equation}
where \(0\), \(1\), and \(2\) correspond to shutdown, curtailment, and regular generation, respectively. Shutdown is given priority when the shutdown and curtailment masks overlap.

\section{Additional Results}

\subsection{Average Performance Comparison across Regions}
Table~\ref{tab:average_performance} reports the average full-shot performance across four regions. Since the rated power varies across wind farms, we report normalized metrics, defined as NMAE = MAE/rated power $\times$ 100\% and NRMSE = RMSE/rated power $\times$ 100\%. UniWind achieves the best NMAE and NRMSE in all regions. These consistent gains show that UniWind remains robust across different regional distributions, while statistical and renewable-energy forecasting baselines exhibit more region-dependent performance.

\begin{table}[htbp]
\small
\centering
\caption{Average performance comparison of end-to-end prediction across regions (\%).}
\label{tab:average_performance}
\setlength{\tabcolsep}{1.25mm}{
\begin{tabular}{lcccccccc}
\toprule
\multirow{2}{*}{Method} & 
\multicolumn{2}{c}{SD} & 
\multicolumn{2}{c}{SX} & 
\multicolumn{2}{c}{AH} & 
\multicolumn{2}{c}{UK} \\
\cmidrule(lr){2-3} \cmidrule(lr){4-5} \cmidrule(lr){6-7} \cmidrule(lr){8-9}
 & NMAE  & NRMSE & NMAE & NRMSE & NMAE  & NRMSE & NMAE & NRMSE \\
\midrule
PowerCurve~\cite{haas_windpowerlib_2024} & 22.38 & 30.18 & 24.88 & 30.96 & 38.65 & 44.23 & 30.00 & 36.50 \\
\hline
LightGBM~\cite{ju2019model} & 12.56 & 18.30 & 15.13 & 19.28 & 17.11 & 21.98 & \underline{17.29} & \underline{21.35} \\
XGBoost~\cite{phan2021hybrid} & \underline{12.35} & \underline{18.20} & \underline{14.99} & \underline{19.06} & 16.89 & 21.72 & 18.17 & 21.94 \\
\hline
patchTST~\cite{nietime} & 24.03 & 33.77 & 23.50 & 32.82 & 24.79 & 33.44 & 28.09 & 35.16 \\
iTransformer~\cite{liuitransformer} & 25.17 & 33.76 & 22.56 & 31.77 & 25.72 & 33.91 & 31.85 & 40.67 \\
TimeMixer~\cite{wangtimemixer} & 24.29 & 32.71 & 23.38 & 32.72 & 24.95 & 32.63 & 30.06 & 38.34 \\
xPatch~\cite{stitsyuk2025xpatch} & 24.69 & 33.17 & 23.04 & 31.90 & 24.82 & 32.86 & 31.72 & 39.50 \\
\hline
CrossViViT~\cite{boussif2023improving} & 22.85 & 29.64 & 26.40 & 31.90 & 23.49 & 28.63 & 28.99 & 34.74 \\
FusionSF~\cite{ma2024fusionsf} & 18.38 & 26.05 & 17.28 & 23.74 & \underline{14.77} & \underline{20.33} & 20.66 & 26.93 \\
2DXformer~\cite{zhang20252dxformer} & 16.98 & 23.84 & 17.41 & 23.54 & 16.89 & 24.16 & 21.87 & 29.34 \\
\hline
\textbf{UniWind} & \textbf{9.92} & \textbf{14.94} & \textbf{12.80} & \textbf{17.29} & \textbf{10.39} & \textbf{14.52} & \textbf{16.14} & \textbf{20.99} \\
{Improvement (\%)} & 19.68 & 17.91 & 14.61 & 9.29 & 29.65 & 28.58 & 6.65 & 1.69\\
\bottomrule
\end{tabular}}
\end{table}

\subsection{Ablation Studies}
The experimental results of the ablation study on the SD\_A, SX\_A, AH\_A and UK\_Penmanshiel datasets are shown in Figure~\ref{fig:all_Ablation}. 

\begin{figure*}[!htp]
    \centering
    \vspace{-1.5em}
    \subfigure[SD\_A]{
    \includegraphics[width=0.232\textwidth]{fig/sd_ablation.pdf}
    }
    \subfigure[SX\_A]{
    \includegraphics[width=0.232\textwidth]{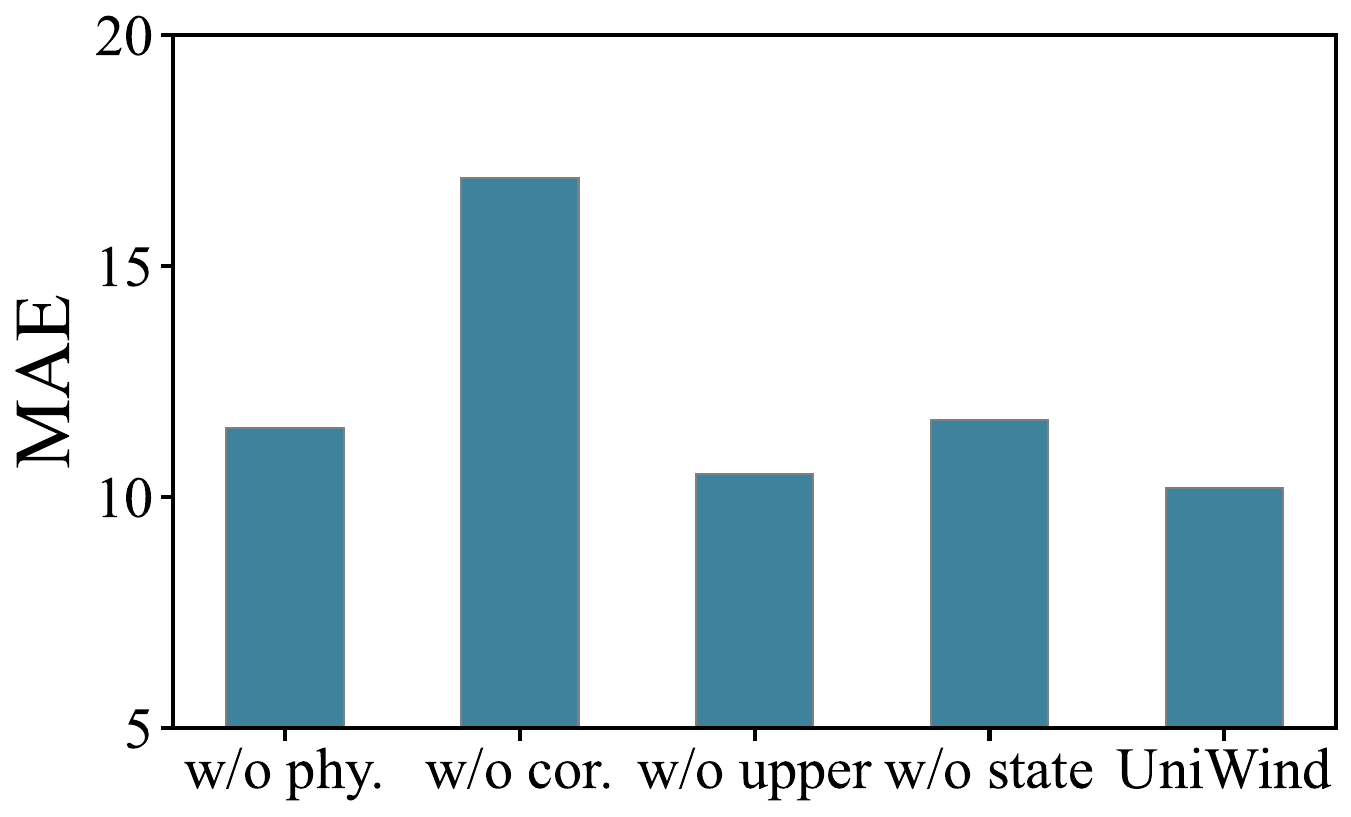}
    }
    \subfigure[AH\_A]{
    \includegraphics[width=0.232\textwidth]{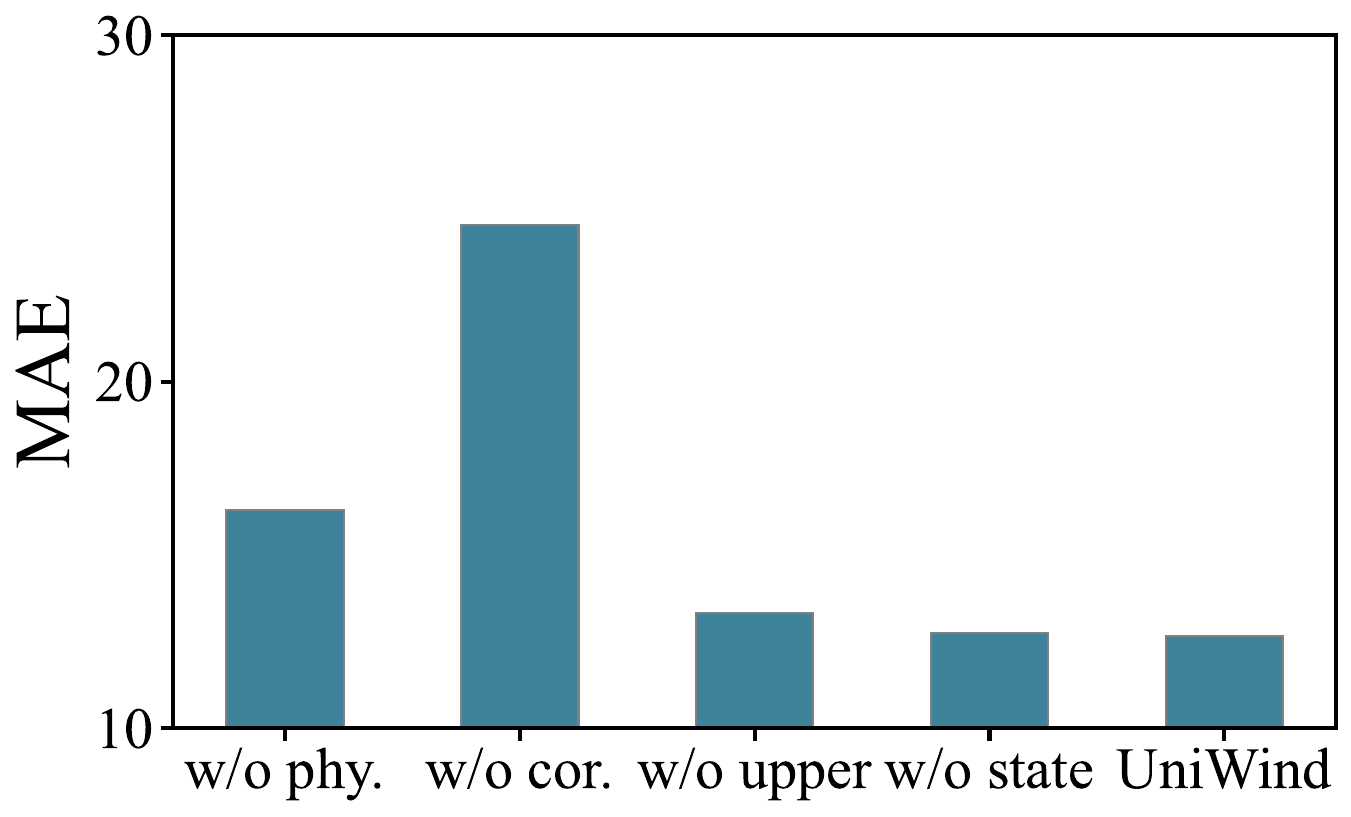}
    }
    \subfigure[UK\_Penmanshiel]{
    \includegraphics[width=0.232\textwidth]{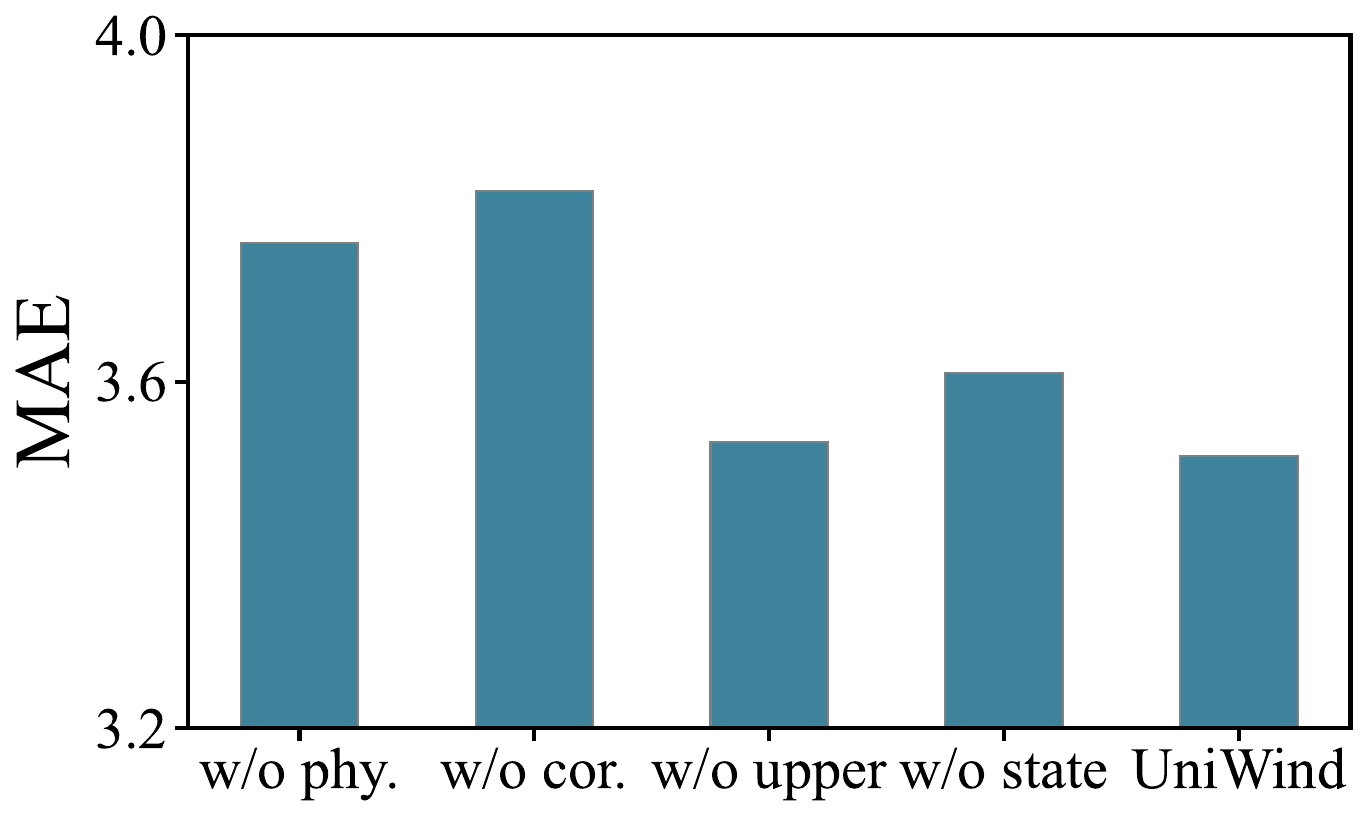}
    }
    \vspace{-1em}
    \caption{Ablation Studies.}
    \label{fig:all_Ablation}
    \vspace{-1em}
\end{figure*}

\subsection{State Statistics}
Table~\ref{tab:wind_power_States} summarizes the average proportions of state-prior labels constructed by the procedure in Appendix~\ref{app:state_prior_labels}. The regular state dominates all regions, but the proportion varies substantially, from 44.8\% in the UK to 84.3\% in AH. The abnormal states also show clear regional differences, indicating that wind farms in different regions exhibit distinct operating-state distributions.

\begin{table}[htbp]
\centering
\caption{The average proportion of states in each region.}
\label{tab:wind_power_States}
\begin{tabular}{lccc}
\toprule
\multicolumn{1}{c}{Region} & 
\multicolumn{1}{c}{Regular (\%)} & 
\multicolumn{1}{c}{Shutdown (\%)} & 
\multicolumn{1}{c}{Curtailment (\%)}\\
\midrule
AH & 84.3 & 14.7 & 1.0 \\
SD & 61.0 & 34.0 & 5.0 \\
SX & 78.2 & 20.8 & 1.0 \\
UK & 44.8 & 18.0 & 37.2 \\
\bottomrule
\end{tabular}
\end{table}

\subsection{Parameter Sensitivity Analyses}
\begin{wrapfigure}[10]{r}{0.6\textwidth}
    \centering
    \vspace{-1.5em}
    \subfigure[$\lambda_{router}$]{
    \includegraphics[width=0.282\textwidth]{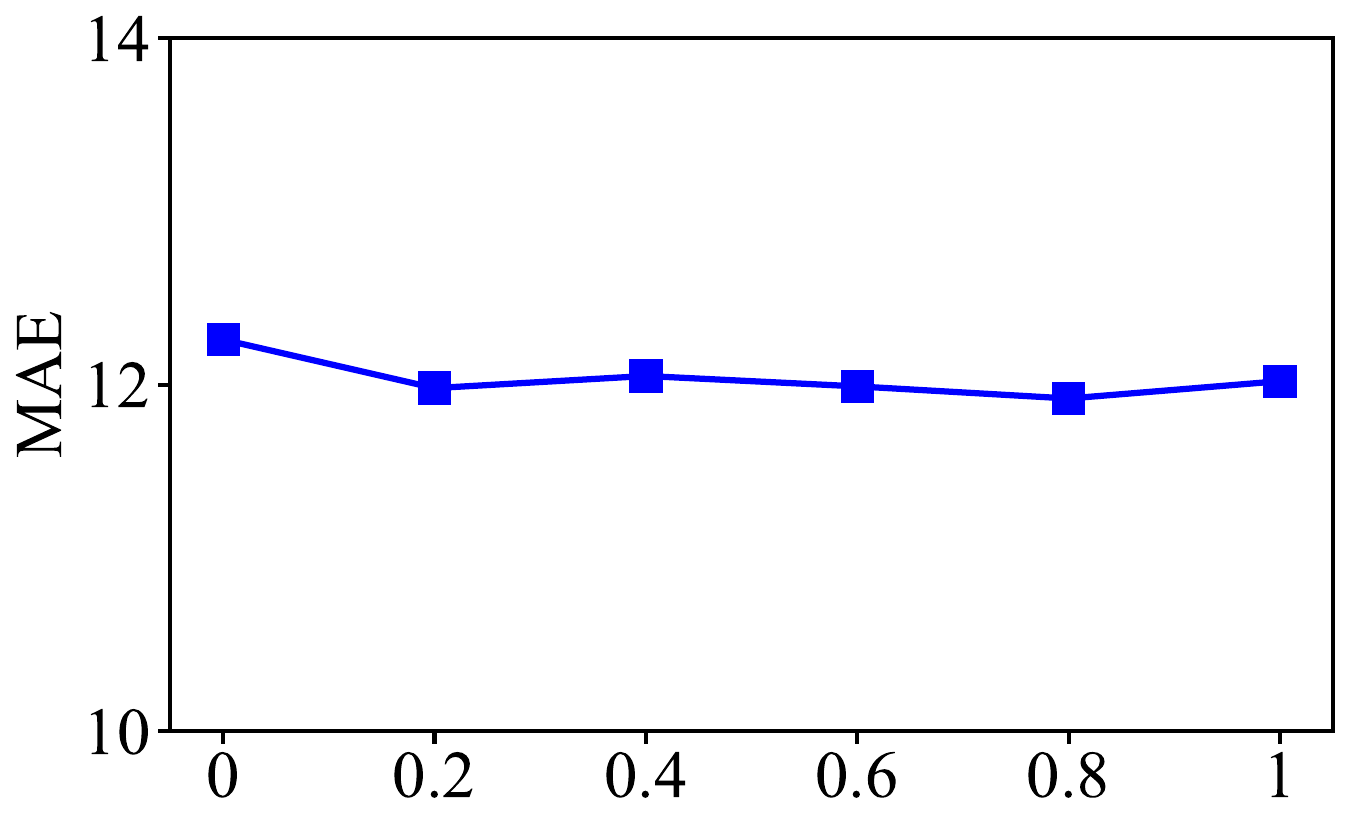}
    }
    \subfigure[$\lambda_{upper}$]{
    \includegraphics[width=0.282\textwidth]{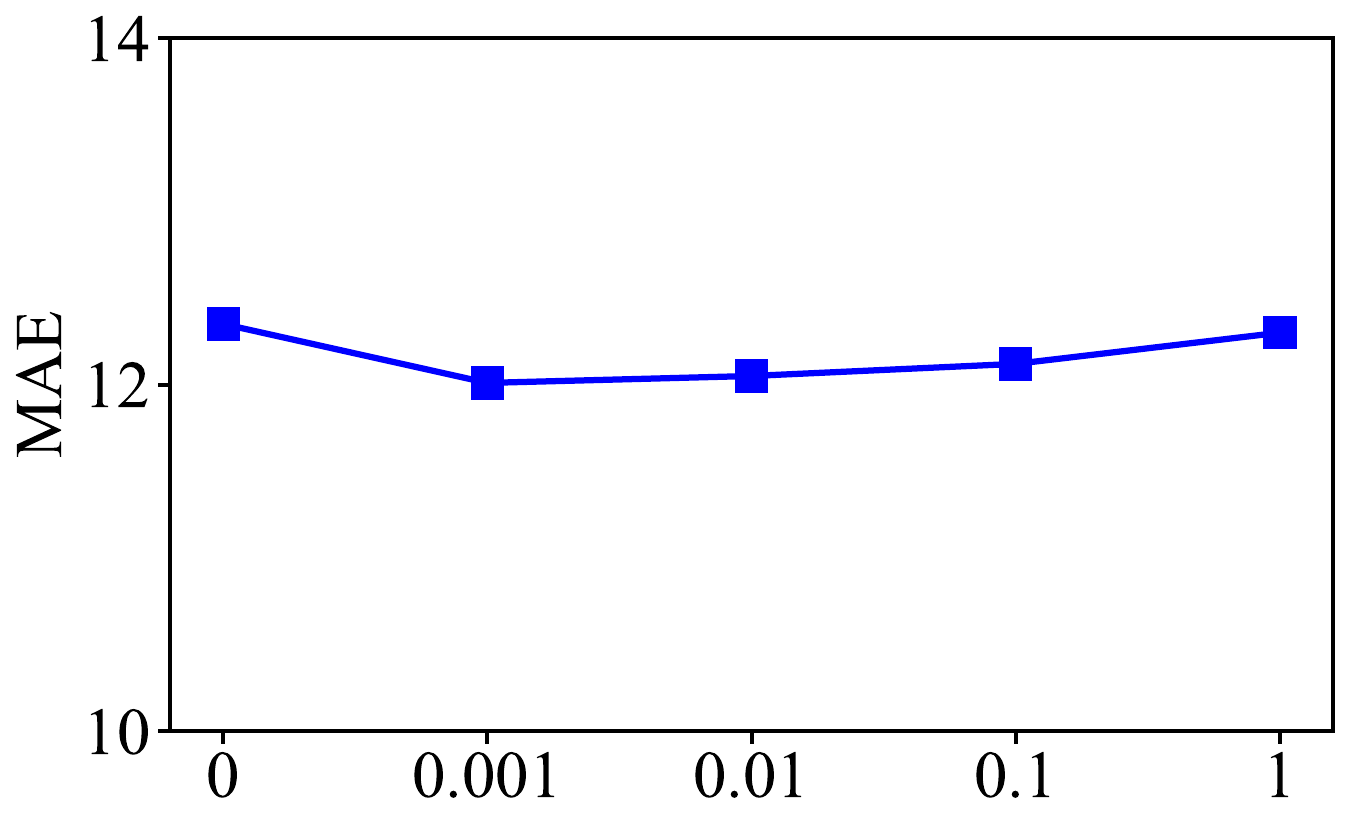}
    }
    \vspace{-1em}
    \caption{Sensitivity analysis on the SD\_A dataset.}
    \label{fig:Sensitivity}
\end{wrapfigure}
Figure~\ref{fig:Sensitivity} shows the sensitivity of UniWind to the router-supervision weight \(\lambda_{\mathrm{router}}\) and the physical upper-bound weight \(\lambda_{\mathrm{upper}}\) on the SD\_A dataset. When either weight is set to 0, the MAE increases, indicating that both knowledge-guided state supervision and physical upper-bound regularization contribute to stable forecasting. For nonzero values, the performance remains relatively stable, suggesting that UniWind is not highly sensitive to the exact choice of these hyperparameters. Based on the validation performance, we set \(\lambda_{\mathrm{router}}=0.4\) and \(\lambda_{\mathrm{upper}}=0.01\) in the final model.

\subsection{Case studies}

\begin{figure*}[htp]
    \centering
    \vspace{-1em}
    \includegraphics[width=0.7\linewidth]{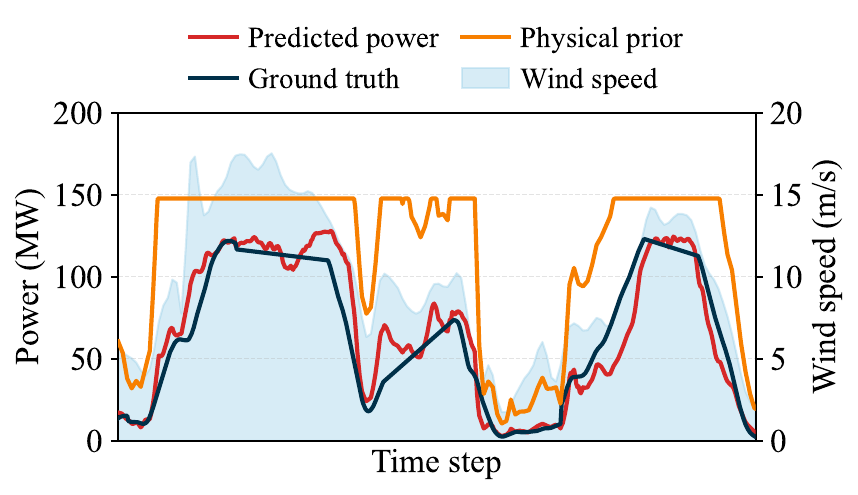}
    \vspace{-1em}
    \caption{Case study on the SD\_A dataset. The physical prior follows the wind-speed-driven available power and saturates near the rated power, while UniWind corrects this prior into the final realized-power forecast.}
    \label{fig:case_study}
    \vspace{-1em}
\end{figure*}

Figure~\ref{fig:case_study} provides a qualitative example on the SD\_A dataset, illustrating how UniWind separates physically available power from realized generation. The physical prior generally follows the wind-speed profile: it increases rapidly when wind speed enters the productive range, decreases under low-wind conditions, and forms a plateau during high-wind periods. This plateau does not indicate a forecasting error, but reflects the saturation of the wind farm at its rated power after the wind speed reaches the rated region. However, the realized power can remain substantially below this available-power envelope because of latent operating conditions and site-specific effects. UniWind therefore does not directly use the physical prior as the final output. Instead, the State-aware Power Corrector adjusts the prior according to the inferred operating state, enabling the prediction to follow the ground-truth trajectory during both high-potential intervals and low-output periods. In particular, the prediction tracks the ground truth when the physical prior stays near the rated-power plateau, and also captures the sharp drops and recoveries when the available power changes rapidly. This case confirms that the proposed physical prior provides a meaningful upper-envelope reference, while state-aware correction is necessary to translate this reference into realistic day-ahead power forecasts.

%%%%%%%%%%%%%%%%%%%%%%%%%%%%%%%%%%%%%%%%%%%%%%%%%%%%%%%%%%%%

\end{document}